\newcommand{\ubar}[1]{\underaccent{\bar}{#1}}
\def\*#1{\boldsymbol{#1}}
\newtheorem{theo}{Theorem}[section]
\newcommand{\tw}[0]{\textwidth}
\newcommand{\igr}[2]{\includegraphics[clip,width=#1\tw]{#2}}
\newcommand{\argmax}{\mathop{\text{argmax}}}
\newcommand{\eq}[1]{(\ref{#1})}
\newcommand{\lw}[1]{\smash{\lower2.ex\hbox{#1}}}
\newcommand{\veps}{\varepsilon}
\newcommand{\RR}{\mathbb{R}}
\newcommand{\NN}{\mathbb{N}}
\newcommand{\EE}{\mathbb{E}}
\newcommand{\cC}{{\cal C}}
\newcommand{\cD}{{\cal D}}
\newcommand{\cF}{{\cal F}}
\newcommand{\cM}{{\cal M}}
\newcommand{\cN}{{\cal N}}
\newcommand{\cX}{{\cal X}}
\newif\ifdraft %
\title{Multi-objective Bayesian Optimization using \\ Pareto-frontier Entropy}
\author[1]{Shinya~Suzuki}
\author[2]{Shion~Takeno}
\author[3]{Tomoyuki~Tamura}
\author[4]{Kazuki~Shitara}
\author[5]{Masayuki~Karasuyama}
\affil[1,2,3,5]{Nagoya Institute of Technology}
\affil[3,4,5]{National Institute for Material Science}
\affil[4]{Osaka University}
\affil[4]{Japan Fine Ceramics Center}
\affil[ ]{\textit{suzuki.s.mllab.nit@gmail.com, takeno.s.mllab.nit@gmail.com, tamura.tomoyuki@nitech.ac.jp, shitara@jwri.osaka-u.ac.jp, karasuyama@nitech.ac.jp}}
\date{}
\begin{document}
\maketitle

\begin{abstract}
This paper studies an entropy-based multi-objective Bayesian optimization (MBO).
The \emph{entropy search} is successful approach to Bayesian optimization.
However, for MBO, existing entropy-based methods ignore trade-off among objectives or introduce unreliable approximations.
We propose a novel entropy-based MBO called \emph{Pareto-frontier entropy search} (PFES) by considering the entropy of \emph{Pareto-frontier}, which is an essential notion of the optimality of the multi-objective problem.
Our entropy can incorporate the trade-off relation of the optimal values, and further, we derive an analytical formula without introducing additional approximations or simplifications to the standard entropy search setting. 
We also show that our entropy computation is practically feasible by using a recursive decomposition technique which has been known in studies of the Pareto hyper-volume computation.
Besides the usual MBO setting, in which all the objectives are simultaneously observed, we also consider the ``decoupled'' setting, in which the objective functions can be observed separately.
PFES can easily adapt to the decoupled setting by considering the entropy of the marginal density for each output dimension.
This approach incorporates dependency among objectives conditioned on Pareto-frontier, which is ignored by the existing method.
Our numerical experiments show effectiveness of PFES through several benchmark datasets.
\end{abstract}

\section{Introduction}
\label{sec:intro}

\setlength{\textfloatsep}{10pt}
\begin{figure}[t]
 \centering
 \igr{.47}{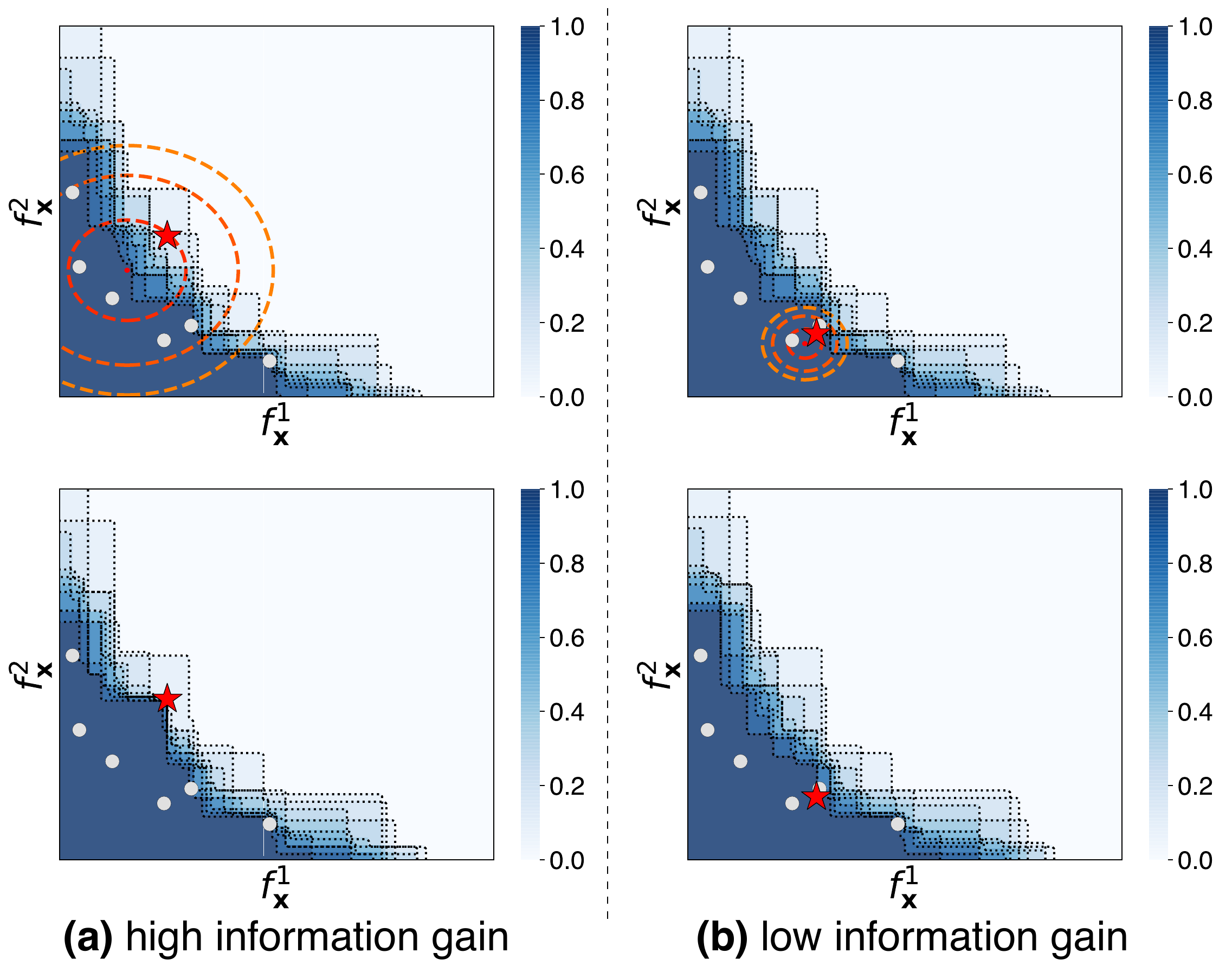}
 \caption{
 Illustrative examples of Pareto-frontier entropy. 
 The blue heatmap represents $p(\*f_{\*x} \preceq \cF^*)$ estimated by $10$ Pareto-frontiers sampled from Gaussian process, and the white points are the observed data.
 The predictive distribution for $\*f_{\*x}$ is illustrated by the nested red circles in the above two plots. 
 Suppose that the red star point is a sample generated by each predictive distribution, and the bottom two plots show $p(\*f_{\*x} \preceq \cF^*)$  after adding the red star point into the training dataset.
 (a) 
 Since the predictive distribution is on around the Pareto-frontier, the mutual information between $\*f_{\*x}$ and $\cF^*$ is high.
 Therefore, when a sample is obtained from the predictive distribution, uncertainty of the Pareto-frontier is drastically reduced in the bottom plot.
 (b) 
 Since the predictive distribution has low variance and is not particularly close to the Pareto-frontier, the mutual information is low in this case.
 Even if a sample is obtained from the predictive distribution, uncertainty of the Pareto-frontier is not largely changed.
 }
 \label{fig:pareto-entropy}
\end{figure}

This paper studies the black-box optimization problem with multiple objective functions.
A variety of engineering problems require optimally designing multiple utility evaluations.
For example, in materials design of the lithium-ion batteries, simultaneously maximizing ion-conductivity and stability is required for practical use.
This type of problems can be formulated as jointly maximizing $L$ unknown functions $f^1(\*x), \ldots, f^L(\*x)$ on some input domain $\cX$, which is called \emph{multi-objective optimization} (MOO) problem.
MOO is often quite challenging because, typically, there does not exist any single optimal option due to the trade-off relation among different objectives.
Further, as in the case of the single-objective black-box optimization, obtaining observations of each objective function is often highly expensive.
For example, in the scientific experimental design such as synthesizing proteins, querying one observation can take more than a day.

A widely accepted optimality criterion in MOO is \emph{Pareto-optimal}.
%
%
For a pair of 
$\*f_{\*x} \coloneqq (f^1(\*x),\ldots,f^L(\*x))^\top$ 
and 
$\*f_{\*x'} \coloneqq (f^1(\*x'),\ldots,f^L(\*x'))^\top$, 
if
$f^l({\*x}) \geq f^l({\*x'})$ for $\forall l \in \{ 1, \ldots, L \}$, 
we say ``$\*f_{\*x}$ dominates $\*f_{\*x'}$''. 
If $\*f_{\*x}$ is not dominated by any other $\*f_{\*x'}$ in the domain, $\*f_{\*x}$ is called Pareto-optimal.
\emph{Pareto-frontier} $\cF^*$ is defined as a set of Pareto-optimal $\*f_{\*x}$.
%
In other words, if $\*f_{\*x}$ is included in a Pareto-frontier set, there are no alternative $\*x$ which can improve $\*f_{\*x}$ in every objective simultaneously.

In this paper, we focus on the information-based approach to searching Pareto-optimal points with \emph{Bayesian optimization} (BO).
A seminal work on this direction is \emph{predictive entropy search for multi-objective optimization} (PESMO), which defines an acquisition function through the entropy of a set of Pareto-optimal $\*x$ \citep{Hernandez-lobatoa2016-Predictive}.
They showed that the entropy-based acquisition function can achieve the superior performance compared with other types of criteria.
However, PESMO employs an approximation based on \emph{expectation propagation} (EP) \citep{Minka2001-Expectation} because the direct evaluation of their entropy is computationally intractable.
In their EP, a non-Gaussian density is replaced with the Gaussian, and it is difficult to show accuracy and reliability of this replacement.
Further, the resulting calculation of the acquisition function is extremely complicated.
On the other hand, \citet{Belakaria2019-Max} proposed to use entropy of the max-values of each dimension $\ell = 1, \ldots, L$, called \emph{max-value entropy search for multi-objective optimization} (MESMO) .
This drastically simplifies the calculation, but obviously, Pareto-frontier is not constructed only by the max-value of each axis, and thus, $\*f_{\*x} \in \cF^*$ which does not have any max-values is not preferred by this criterion. 
For other related work, we discuss in Section~\ref{sec:related-work}.

We propose another entropy-based Bayesian MOO called \emph{Pareto-frontier entropy search} (PFES).
We consider the entropy of the Pareto-frontier $\cF^*$, called Pareto-frontier entropy, defined in the space of the objective functions $\*f_{\*x}$, unlike PESMO considering the entropy of the Pareto-optimal $\*x$.
The intuition behind Pareto-frontier entropy is shown in Fig.~\ref{fig:pareto-entropy}.
By inheriting the advantage of the entropy-based approach, PFES provides a measure of global utility without any trade-off parameter.
Under a few common conditions in entropy-based methods, we show that Pareto-frontier entropy can be derived analytically by partitioning the dominated space into a set of hyper-rectangle cells.
Therefore, compared with PESMO, PFES is easy to obtain reliable evaluation of the entropy with much simpler computations.
On the other hand, PFES can capture the trade-off relation in the Pareto-frontier which is ignored by MESMO, but is essential for the MOO problem.
Although a na{\"i}ve cell partitioning can generate a large number of cells for $L \geq 3$, we show an efficient computation by using a partitioning technique used in \emph{Pareto hypervolume} computation.
This paper also discusses the \emph{decoupled setting}, which was first introduced by \citep{Hernandez-lobatoa2016-Predictive} as an extension of PESMO.
In this scenario, we assume that each one of objective functions can be observed individually. 
Since observing all the objective functions simultaneously can cause huge cost, the decoupled setting evaluates only one of objectives at every iteration.
Although this setting has not been widely studied, it is highly important in practice.
In particular, for search problems in scientific fields such as materials science and bio-engineering, multiple properties of objects (e.g., crystals, compounds, and proteins) can often be investigated separately by performing different experimental measurements or simulation-computations. 
%
In the battery material example, conductivity and stability 
can be evaluated through two independent physical simulations. 
%
%
Other directions of examples are also suggested by \citep{Hernandez-lobatoa2016-Predictive}, such as in robotics and design of a low calorie cookie \citep{Solnik2017-Bayesian}.
%
%
%
%
However, the existing PESMO-based acquisition function, derived by na{\"i}vely decomposing the original acquisition function, was not fully justified in a sense of the entropy.
We show that our PFES can be simply extended to the decoupled setting by considering the entropy of the marginal density without introducing any additional approximation, and it can also be obtained analytically.

Our numerical experiments show effectiveness of PFES through synthetic functions and real-world datasets from materials science.

\section{Preliminary}
\label{sec:}

We consider the \emph{multi-objective optimization} (MOO) problem which maximizes $L \geq 2$ objective functions $f^l: \cX \rightarrow \RR$ for $l = 1, \ldots, L$, where $\cX \subseteq \RR^d$ is an input domain.
Let
$\*f_{\*x} \coloneqq (f^1_{\*x}, \ldots, f^L_{\*x})^\top$, 
where
$f^l_{\*x} \coloneqq f^l(\*x)$.
The optimal solution of MOO is usually defined by \emph{Pareto optimality}.
%
For a pair of $\*f_{\*x}$ and $\*f_{\*x'}$, if
$f^l_{\*x} \geq f^l_{\*x'}$ for $\forall l \in \{ 1, \ldots, L \}$, 
we say ``$\*f_{\*x}$ dominates $\*f_{\*x'}$'' and 
the relation is denoted as
$\*f_{\*x} \succeq \*f_{\*x'}$.
If $\*f_{\*x}$ is not dominated by any other $\*f_{\*x'}$ in the domain, $\*f_{\*x}$ is called Pareto-optimal.
\emph{Pareto-frontier} $\cF^*$ is a set of Pareto-optimal $\*f_{\*x}$ which is written as
$\cF^* \coloneqq \{ \*f_{\*x} \in \cF_{\cX} \mid \*f_{\*x'} \not\succeq \*f_{\*x}, \forall \*f_{\*x'} \in \cF_{\cX}, \*f_{\*x} \neq \*f_{\*x'} \}$, 
where $\cF_{\cX} \coloneqq \{ \*f_{\*x} \in \RR^L \mid \forall \*x \in \cX \}$.
%
Although the Pareto-optimal points can be infinite, most strategies aim at finding a small subset of them which approximate the true $\cF^*$ with sufficient accuracy.


Following the standard formulation of \emph{Bayesian optimization} (BO), we model the objective function by Gaussian process regression (GPR).
%
%
An observation for the $l$-th objective value of $\*x_i$ is assumed to be $y_{i}^l = f^l_{\*x_i} + \veps$, where $\veps \sim \cN(0,\sigma^2_{\rm noise})$.
The training dataset is written as $\cD \coloneqq \{ (\*x_i, \*y_i )\}_{i = 1}^n$, where $\*y_i = (y_{i}^1,\ldots,y_{i}^L)^\top$.
Independent $L$ GPRs are applied to each dimension with a kernel function $k(\*x,\*x')$.
By setting prior mean as $0$, the predictive mean and variance of the $l$-th GPR are 
 $\mu_l(\*x) 
 =
 \*k(\*x)^\top 
 \left(
 \*K + \sigma^2_{\rm noise} \*I
 \right)^{-1}
 \*y^{l}$, 
 and
 $\sigma^2_l(\*x)
 = k(\*x,\*x) - 
 \*k(\*x)^\top 
 \left(
 \*K + \sigma^2_{\rm noise} \*I
 \right)^{-1}
 \*k(\*x)$,
where
$\*k(\*x) \coloneqq (k(\*x,\*x_1),\ldots,k(\*x_,\*x_n))^\top$,
$\*y^l \coloneqq (y_{1}^l, \ldots, y_{n}^l)^\top$, 
and $\*K$ is the kernel matrix in which the $i,j$-element is defined by $k(\*x_i,\*x_j)$.
We also define
$\*\mu(\*x) \coloneqq (\mu_1(\*x),\ldots,\mu_L(\*x))^\top$
and
$\*\sigma(\*x) \coloneqq (\sigma_1(\*x),\ldots,\sigma_L(\*x))^\top$.

\section{Pareto-Frontier Entropy Search for Multi-Objective Optimization}
\label{sec:PFES}


We propose a novel information-theoretic approach to multi-objective BO (MBO).
Our method, called \emph{Pareto-frontier entropy search} (PFES), considers maximizing the information gain for the Pareto-frontier $\cF^*$.
With a slight abuse of notation, we write $\*f \preceq \cF^*$ when $\*f \in \RR^L$ is dominated by at least one of vectors in the Pareto-frontier $\cF^*$.
We define the mutual information between $\*f_{\*x}$ and the Pareto-frontier $\cF^*$ as follows:
\begin{align}
 \begin{split}  
  & I(\cF^* ; \*f_{\*x} \mid \cD) 
   \\
  & \coloneqq H[ p(\*f_{\*x} \mid \cD) ]
  -
  \EE_{\cF^*}
  \left[
  H[ p(\*f_{\*x} \mid \cD, \*f_{\*x} \preceq \cF^*) ] 
  \right],
 \end{split}
 \label{eq:mutual-info}
\end{align}
where $H[\cdot]$ is the differential entropy. 
%
%
%
%
%
In the second term, we regard the conditional distribution 
$\*f_{\*x}$ given $\cF^*$ 
as 
$p(\*f_{\*x} \mid \cD, \*f_{\*x} \preceq \cF^*)$, 
i.e., conditioning only on the given $\*x$ rather than requiring 
$\*f_{\*x} \preceq \cF^*$ for $\forall \*x \in \cX$.
Note that the same simplification has been employed by most of existing state-of-the-art information-theoretic BO algorithms including well-known \emph{predictive entropy search} (PES) and \emph{max-value entropy search} (MES) proposed by \citet{Hernandez2014-Predictive} and \citet{Wang2017-Max}, respectively.
Since the superior performance of these methods compared with other approaches has been shown, we also employ this simplification.
%
%
%
An intuition behind this mutual information can be simply illustrated as shown in \figurename~\ref{fig:pareto-entropy}.
If the predictive distribution of the candidate $\*x$ is strongly dependent with Pareto-frontier, observing this candidate is effective to identifying Pareto-frontier.

\subsection{Analytical Representation of Acquisition Function}
\label{ssec:acquisition}

Since the first term in \eqref{eq:mutual-info} is the simple $L$-dimensional Gaussian entropy, it can be analytically calculated. 
%
For the expectation over $\cF^*$ in the second term, we use the Monte Carlo estimation by following the standard approach in the entropy-based BO.
%
%
%
%
%
%
%
%
By sampling Pareto-frontier $\cF^*$ from the current GPR model, our acquisition function is written as follows:
\begin{align}
 \begin{split}  
 a(\*x) = &
 H[ p(\*f_{\*x} \mid \cD) ] \\
 &
 -
 \frac{
 1
 }{|\mathrm{PF}|}
 \sum_{ \cF^* \in \mathrm{PF}} H[ p(\*f_{\*x} \mid \cD, \*f_{\*x} \preceq \cF^*) ],
 \end{split}                    	
 \label{eq:approx-expected-entropy}
\end{align}    
where $\mathrm{PF}$ is a set of sampled Pareto-frontier $\cF^*$.
We discuss the detail of the sampling procedure in Section~\ref{ssec:sampling}.
The entropy in the second term in \eq{eq:approx-expected-entropy} is still highly complicated at a glance, because the condition $\*f_{\*x} \preceq \cF^*$ makes the distribution non-Gaussian.
We show that this entropy can be analytically represented by using a hyper-rectangle based partitioning of the dominated region.

\setlength{\textfloatsep}{10pt}
\begin{figure*}[t]
 \centering
 \igr{1}{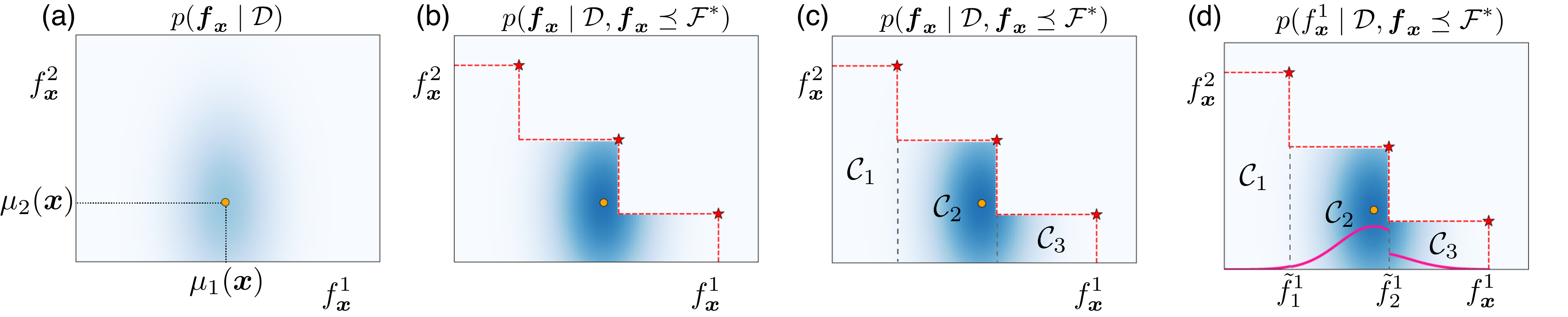}
 \caption{
 A schematic illustration of truncation by Pareto-frontier.
 (a) Original predictive distribution of two GPRs in the output space.
 (b) Predictive distribution truncated by Pareto-frontier, which results in PFTN.
 All $\*f_{\*x}$ should be dominated by the given Pareto-frontier (red stars).
 (c) Rectangle-based partitioning for the entropy evaluation.
 The entropy of PFTN is evaluated by decomposing the dominated region into rectangles called \emph{cells} ($\cC_1, \cC_2$, and $\cC_3$ in the plot).
 (d) Marginal density $p(f^1_{\*x} \mid \cD, \*f_{\*x} \preceq \cF^*)$ considered in the decoupled setting (the solid pink line).
 }
 \label{fig:PFTN}
\end{figure*}

Since the condition 
$\*f_{\*x} \preceq \cF^*$
indicates that $\*f_{\*x}$ must be dominated by Pareto-frontier $\cF^*$, the density 
$p(\*f_{\*x} \mid \cD, \*f_{\*x} \preceq \cF^*)$
is defined as a truncated distribution of the unconditional 
$p(\*f_{\*x} \mid \cD)$ 
which is the predictive distribution of GPR, i.e., the independent multi-variate normal distribution.
We call this distribution \emph{Pareto-frontier truncated normal distribution} (PFTN).
%
\figurename~\ref{fig:PFTN} (a) and (b) illustrate the densities before and after the truncation, respectively.
The density of PFTN
$p(\*f_{\*x} \mid \cD, \*f_{\*x} \preceq \cF^*)$
is written as 
\[ 
 p(\*f_{\*x} \mid \cD, \*f_{\*x} \preceq \cF^*) 
 =
 \begin{cases}
  \frac{1}{Z}
  p(\*f_{\*x} \mid \cD) & \text{ if } \*f_{\*x} \preceq \cF^*, \\
  0 & \text{ otherwise }, 
 \end{cases}
\]
where $Z \in \RR$ is a normalization constant.
%
%
Let $\cF \coloneqq \{ \*f \in \RR^L \mid \*f \preceq \cF^* \}$ be the dominated region, and $M \in \NN$ be the number of hyper-rectangles, called \emph{cells}, by which the region $\cF$ can be disjointly constructed as illustrated by \figurename~\ref{fig:PFTN}~(c).
In other words, we can write
$\cF = \cC_1 \cup \cC_2 \cup \ldots \cup \cC_M$, 
where the $m$-th cell $\cC_m$ is defined by 
$(\ell_m^1,u_m^{1}] \times (\ell_m^2,u_m^{2}] \times \ldots \times (\ell_m^L,u_m^{L}]$.
Note that this partitioning is created from $\cF^*$, which is generated by the current GPR model (not from the actual observed data $\{ \*y_i \}_{i=1}^n$).
%

Let 
$\ubar{\alpha}_{m,l} \coloneqq (\ell^l_m - \mu_l(\*x)) / \sigma_l(\*x)$, 
$\bar{\alpha}_{m,l} \coloneqq (u^l_m - \mu_l(\*x)) / \sigma_l(\*x)$, 
$Z_{ml} \coloneqq \Phi(\bar{\alpha}_{m,l}) - \Phi(\ubar{\alpha}_{m,l})$, 
and 
$Z_m \coloneqq \prod_{l=1}^L Z_{ml}$,
where $\Phi$ is the standard Gaussian cumulative distribution function (CDF).
For the entropy in the second term of \eq{eq:approx-expected-entropy},
the cell-based decomposition of the dominated region derives the following theorem (the proof is in Appendix~\ref{sec:proof-theorem1}):
\begin{theo}
 \label{thm:entropy-reduction}
 For $L$ independent GPRs, the entropy of PFTN $p(\*f_{\*x} \mid \cD, \*f_{\*x} \preceq \cF^*)$ is given by
 \begin{align}  
  \begin{split}   
  & 
  H[ p(\*f_{\*x} \mid \cD, \*f_{\*x} \preceq \cF^*) ]
  \\
  & 
  =
  \log
  \left(
  (\sqrt{2 \pi e})^L
  Z
  \prod_{l=1}^L \sigma_l(\*x) 
  \right)
  +
  \sum_{m = 1}^M
  \frac{Z_{m}}{Z}  \sum_{l = 1}^L
  \Gamma_{ml},
  \end{split}
  \label{eq:entropy-given-frontier}
 \end{align}
 where 	
 $\Gamma_{ml} \coloneqq ({\ubar{\alpha}_{m,l} \phi(\ubar{\alpha}_{m,l}) - \bar{\alpha}_{m,l} \phi(\bar{\alpha}_{m,l})}) / ({2 Z_{ml}})$ with standard Gaussian probability density function (PDF) $\phi$, and
\begin{align}
 Z 
 &= 
 \sum_{m = 1}^M
 \prod_{l = 1}^L 
 \int_{\ell^l_m}^{u^l_{m}}
 p(f^l_{\*x} \mid \cD)
 {\rm d} f^l_{\*x}	
 =
 \sum_{m = 1}^M
 Z_m.
 \label{eq:Z-decomposed}
\end{align}
\end{theo}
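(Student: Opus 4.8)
The plan is to evaluate $H[p(\*f_{\*x}\mid\cD,\*f_{\*x}\preceq\cF^*)]$ directly from the definition $H[p]=-\int p\log p$, using two structural facts already established: the conditional density equals the product-Gaussian predictive $q(\*f_{\*x})\coloneqq p(\*f_{\*x}\mid\cD)$ rescaled by $1/Z$ on the dominated region $\cF$ and $0$ outside, and $\cF$ is the disjoint union of the axis-aligned cells $\cC_1,\ldots,\cC_M$. Substituting the truncated form and splitting the logarithm gives
\[
 H[p]=-\frac1Z\int_\cF q(\*f_{\*x})\bigl(\log q(\*f_{\*x})-\log Z\bigr)\,\mathrm{d}\*f_{\*x}
 =\log Z-\frac1Z\int_\cF q(\*f_{\*x})\log q(\*f_{\*x})\,\mathrm{d}\*f_{\*x},
\]
where the identity $\int_\cF q(\*f_{\*x})\,\mathrm{d}\*f_{\*x}=Z$ produces the isolated $\log Z$ term. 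Everything then reduces to the single integral $\int_\cF q\log q$.

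Next I would exploit disjointness to write $\int_\cF q\log q=\sum_{m=1}^M\int_{\cC_m}q\log q$, and use that $q(\*f_{\*x})=\prod_{l=1}^L q_l(f^l_{\*x})$ factorizes into independent univariate Gaussians $q_l$, so $\log q=\sum_{l}\log q_l$. On a single cell $\cC_m=(\ell_m^1,u_m^1]\times\cdots\times(\ell_m^L,u_m^L]$ each summand factorizes across coordinates: the coordinates $l'\neq l$ integrate to $Z_{ml'}$, leaving a one-dimensional integral in coordinate $l$. Collecting the $\prod_{l'\neq l}Z_{ml'}=Z_m/Z_{ml}$ prefactors yields
\[
 \int_{\cC_m}q\log q\,\mathrm{d}\*f_{\*x}=\sum_{l=1}^L\frac{Z_m}{Z_{ml}}\,J_{ml},\qquad J_{ml}\coloneqq\int_{\ell_m^l}^{u_m^l}q_l(f)\log q_l(f)\,\mathrm{d}f.
\]
The same factorization also gives $Z=\sum_m\int_{\cC_m}q=\sum_m\prod_l Z_{ml}=\sum_m Z_m$, which is \eq{eq:Z-decomposed}.

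The one technical computation is $J_{ml}$. I would standardize via $\alpha=(f-\mu_l(\*x))/\sigma_l(\*x)$, so that $q_l(f)\,\mathrm{d}f=\phi(\alpha)\,\mathrm{d}\alpha$, $\log q_l(f)=-\log\sigma_l(\*x)-\tfrac12\log(2\pi)-\tfrac12\alpha^2$, and the limits become $\ubar{\alpha}_{m,l},\bar{\alpha}_{m,l}$. The constant and $\log\sigma_l(\*x)$ pieces integrate against $\phi$ to $Z_{ml}$, while for the $\tfrac12\alpha^2$ piece I would apply $\phi'(\alpha)=-\alpha\phi(\alpha)$ and integration by parts to obtain
\[
 \int_{\ubar{\alpha}_{m,l}}^{\bar{\alpha}_{m,l}}\alpha^2\phi(\alpha)\,\mathrm{d}\alpha=Z_{ml}+\ubar{\alpha}_{m,l}\phi(\ubar{\alpha}_{m,l})-\bar{\alpha}_{m,l}\phi(\bar{\alpha}_{m,l}).
\]
Assembling these gives $J_{ml}=-Z_{ml}\bigl(\log(\sqrt{2\pi e}\,\sigma_l(\*x))+\Gamma_{ml}\bigr)$, in which the bracketed $\alpha^2$ contribution is precisely $\Gamma_{ml}$. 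Substituting back, the prefactor $Z_m/Z_{ml}$ cancels $Z_{ml}$, the $m$-independent logarithmic terms sum against $\sum_m Z_m=Z$ and merge with the isolated $\log Z$ into $\log((\sqrt{2\pi e})^L Z\prod_{l}\sigma_l(\*x))$, and the remaining terms assemble into $\sum_m (Z_m/Z)\sum_l\Gamma_{ml}$, yielding \eq{eq:entropy-given-frontier}. I expect no genuine obstacle beyond careful bookkeeping of the $Z_m/Z_{ml}$ weights and sign tracking in the integration by parts; the cell geometry enters only through disjointness of the $\cC_m$ and the product structure of each cell.
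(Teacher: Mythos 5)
Your proposal is correct and follows essentially the same route as the paper's proof: isolate the $\log Z$ term, decompose the integral $\int_{\cF} q\log q$ over the disjoint cells, factorize across coordinates using independence so that each term reduces to a one-dimensional truncated-Gaussian integral weighted by $\prod_{l'\neq l}Z_{ml'}$, and reassemble. The only (harmless) difference is that you derive the univariate truncated-normal entropy from scratch via standardization and integration by parts, whereas the paper normalizes the one-dimensional integral into $Z_{ml}\int \frac{q_l}{Z_{ml}}\log\frac{q_l}{Z_{ml}}+Z_{ml}\log Z_{ml}$ and cites the known closed form for that entropy.
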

\noindent
This entropy is a simple function of the predictive distribution of GPR at $\*x$ and the Gaussian PDF/CDF functions, and thus, we can easily evaluate \eq{eq:entropy-given-frontier} if the cell-based partitioning is available.
For the partitioning, we discuss in Section~\ref{ssec:partitioning}.

\subsection{Computation of Pareto-frontier Entropy}

Suppose that we already have the predictive distribution of $\*x$, i.e., $\*\mu(\*x)$ and $\*\sigma(\*x)$, a set of sampled Pareto-frontier ${\rm PF}$, and a set of cells $\{ \cC_m \}_{m=1}^M$.
Then, the normalization constant $Z$ \eq{eq:Z-decomposed} is calculated by $O(ML)$, and the acquisition function \eq{eq:approx-expected-entropy} 
can also be obtained by $O(ML)$.
We here describe the sampling procedure of $\cF^*$, and the cell partitioning of the dominated region.

\subsubsection{Sampling Pareto-frontier}
\label{ssec:sampling}

PFES first needs to sample a set of Pareto-frontier $\cF^*$.
%
%
%
%
%
For this step, we follow an approach proposed by the existing information-based MBO \citep{Hernandez-lobatoa2016-Predictive}.
They employed \emph{random feature map} (RFM) \citep{Rahimi2008-Random} to approximate of the current GPR by a Bayesian linear model $\*w_{l}^\top \phi(\*x)$, where $\*w_{l} \in \RR^D$ is a parameter for the $l$-th objective and $\phi: \cX \rightarrow \RR^D$ is a pre-defined basis vector.
By generating $\*w_{l}$ from the posterior, we can sample a ``function'' 
$\*w_{l}^\top \phi(\*x)$
with the cubic computational cost with respect to $D$, typically less than $1000$.
A sample of $\cF^*$ can be obtained through solving MOO 
on
$\*w_{l}^\top \phi(\*x)$ for $l = 1, \ldots, L$.
Since the objective is explicitly known as $\*w_{l}^\top \phi(\*x)$, general MOO algorithms such as NSGA-II \citep{Deb2002-Fast} is applicable to solving this MOO.
%
%
It has been shown empirically that entropy-based approaches are robust with respect to the number of this sampling \citep[e.g.,][]{Wang2017-Max}, and usually only the small number of samples are used (e.g., $10$).
In the later experiments, we evaluate sensitivity of performance to this setting.
%

\subsubsection{Partitioning of Dominated Region}
\label{ssec:partitioning}

For the generated Pareto-frontier, we need to construct a set of cells $\{ \cC_m \}_{m =  1}^M$.
A similar cell-based decomposition has been performed by existing MBO methods such as the well-known expected improvement-based method \citep{Shah2016-Pareto} in a slightly different context.
Although \citet{Shah2016-Pareto} employed a na{\"i}ve grid-based partitioning, which produces $O(|\cF^*|^L)$ cells, this may cause large computational cost, particularly when $L > 2$.
We show a method for the partitioning by which the number of cells can be drastically reduced compared with the na{\"i}ve approach.

For $L = 2$, which is the most common setting in MOO, there exists a decomposition with $M = |\cF^*|$ as we can clearly see in \figurename~\ref{fig:PFTN}~(c).
Most of MOO algorithms (such as NSGA-II, used for generating $\cF^*$) can explicitly specify the maximum number of $|\cF^*|$ beforehand.
This value is typically set as at most a few hundreds \citep{Deb2014-Evolutionary} even for a large $L$ more than $10$, which does not commonly occur in real-world MOO problems.
We also empirically observed that a small $|\cF^*|$ is sufficient to capture the trade-off relation among objectives.
In our later experiments, we set $|\cF^*| = 50$ by following an existing information-based approach \citep{Hernandez-lobatoa2016-Predictive}, based on which PFES showed stable performance.
%

For $L > 2$, the simple partitioning like \figurename~\ref{fig:PFTN}~(c) is not applicable.
To produce a smaller number of cells, we propose to use techniques in the \emph{Pareto hyper-volume} computation.
Pareto hyper-volume is defined by the volume of the region dominated by Pareto-frontier, which is widely used as an evaluation measure of MOO.
Therefore, many studies have been devoted to its efficient computation mainly by decomposing the region into as few cells as possible \citep{Couckuyt2014}.
Through this decomposition, we can obtain a partitioning as a by-product of the hyper-volume computation.
%
For example, quick hyper-volume (QHV) \citep{Russo2014-Quick} is one of well-known methods which recursively calculates the volume by partitioning the region with a quick-sort like divide-and-conquer procedure.
%
Under a few assumptions on the distribution of $\cF^*$, QHV takes 
$O(L |\cF^*|^{1+\epsilon} \log^{L-2} |\cF^*|)$
time for the average case with high probability, where $\epsilon > 0$ is an arbitrary small constant \citep[see][for the detail]{Russo2014-Quick}.
Since the hyper-volume computation is still actively studied, more advanced algorithms are also applicable if it produces rectangle regions as a by-product of the algorithm.
In this paper, we employ QHV because of its efficiency and simplicity.   
Therefore, we can obtain the small number of cells, by which the efficient evaluation of the entropy becomes possible.
%

\section{Extension to Decoupled Setting}
\label{ssec:decoupled}

In the previous section, we assumed that all the objectives are observed simultaneously, to which we refer as the \emph{coupled setting}.
In contrast, the \emph{decoupled setting} assumes that each one of objectives can be separately observed.
%
%
%
Although this setting has not been widely studied in the MBO literature, this can be a significant problem setting particularly in the case that the sampling cost of each objective is highly expensive, because then, observing all the objectives every time may cause large amount of waste-of-cost.
Further, we also focus on the fact that the cost of observations are often different among multiple objective functions. 
Therefore, we introduce a \emph{cost-sensitive} acquisition function into the decoupled setting.

In this setting, we need to determine a pair of an input $\*x$ and an index of an objective $l$ to be observed.
PFES can provide a natural criterion for this purpose by considering the mutual information between $\cF^*$ and the $l$-th objective $I(\cF^* ; f^l_{\*x})$.
Then, we can define the following cost-sensitive acquisition function:
\begin{align}
 \begin{split}  
 a(\*x, l) =
 & 
 \frac{1}{\lambda_l} 
 \Bigl\{
 H[p(f^l_{\*x} \mid \cD)]
 \\
 &
 -
 \frac{1}{|\mathrm{PF}|}
 \sum_{\cF^* \in \mathrm{PF}}
 H[p(f^l_{\*x} \mid \cD, \*f_{\*x} \preceq \cF^*)]
 \Bigr\}
 \end{split}
 \label{eq:acq-single}
\end{align}
where $\lambda_l > 0$ is the observation cost of the $l$-th objective which is assumed to be known beforehand.
A pair of $\*x$ and $l$ to be queried can be determined by $\argmax_{\*x,l} a(\*x,l)$.
Here again, the first term of \eqref{eq:acq-single} is easy to calculate.
We derive an efficient computation for the entropy in the second term.
\figurename~\ref{fig:PFTN}~(d) shows an illustration of the density $p(f^l_{\*x} \mid \cD, \*f_{\*x} \preceq \cF^*)$ in the second term.

\setlength{\textfloatsep}{10pt}
\begin{figure}[t]
 \centering
 \igr{.2}{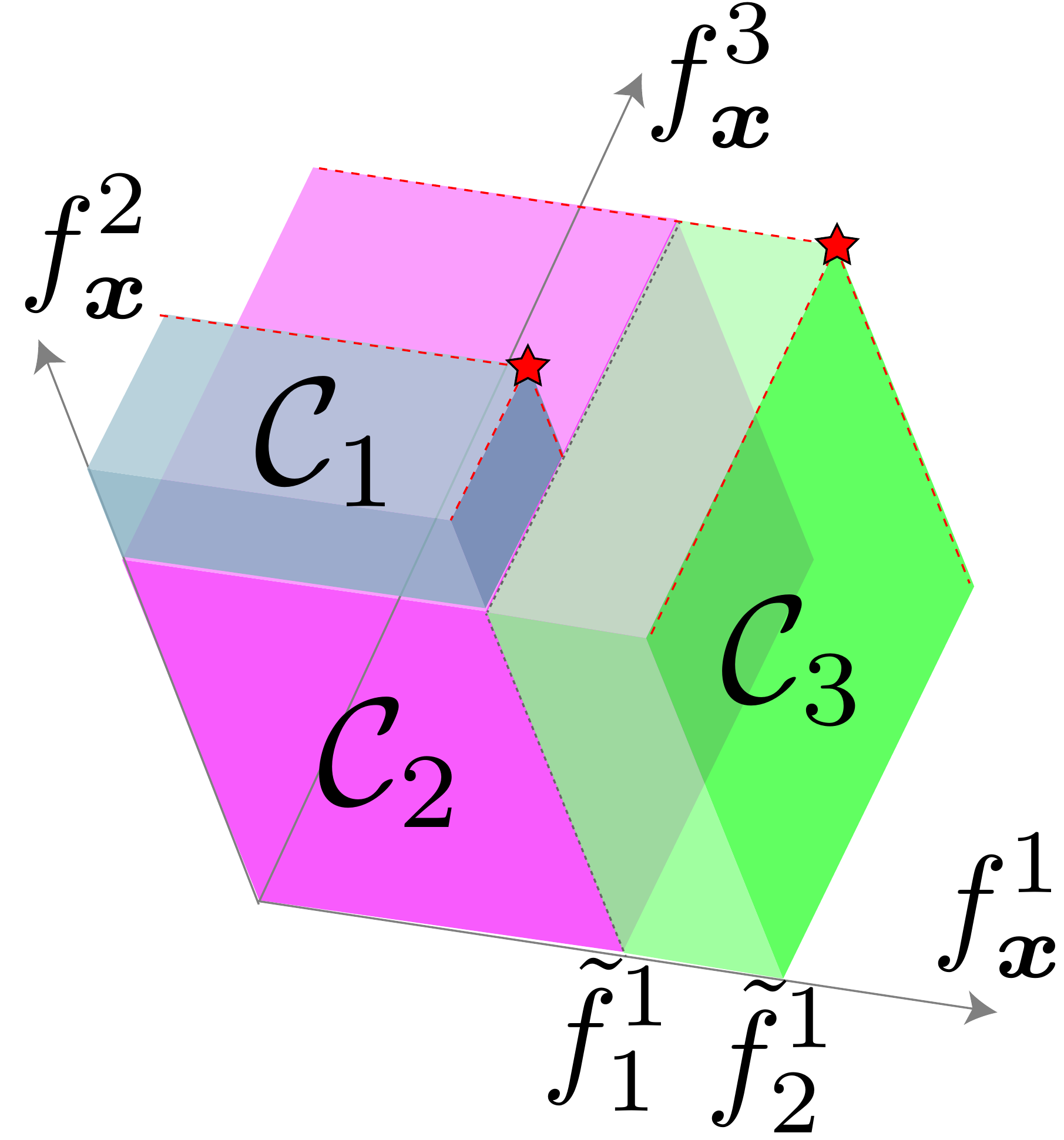}
 \caption{
 An example of partitioning for decoupled setting.
 In this case, $\cM(l,s)$ for $l = 1$ are $\cM(1,0) = \{ 1, 2 \}$, and $\cM(1,1) = \{ 3 \}$.
 }
 \label{fig:cells3d}
\end{figure}

Define $S \coloneqq |\cF^*|$ as the number of the Pareto optimal points, and $\tilde{f}_1^l, \ldots, \tilde{f}_{S_l}^{l}$ for $S_l \leq S$ as a sequence ascendingly sorted by the $l$-th dimension of $\forall \*f_{\*x} \in \cF^*$ in which duplicated values are eliminated.
%
%
For $\forall m \in \{1,\ldots,M\}$, we assume that there exists
$s \in \{0,\ldots, S_l\}$
such that
$(\ell^l_m, u^l_{m}] = (\tilde{f}^l_{s},\tilde{f}^l_{s+1}]$
(this assumption is just for notational simplicity, and we can create the cells in such a way that this condition is satisfied).
The marginal density of PFTN
$p(f^l_{\*x} \mid \cD, \*f_{\*x} \preceq \cF^*)$
depends on the interval $(\tilde{f}^l_s, \tilde{f}^l_{s+1}]$ that $f^l_{\*x}$ exists.
Let
$
\cM(l,s)
 \coloneqq
 \{ m \mid 
 (\ell^l_{m}, u^l_{m}] =
 (\tilde{f}^l_s, \tilde{f}^l_{s+1}]
\}$
be the index set of $\cC_{m}$ in which the $l$-th dimension is equal to $(\tilde{f}^l_s, \tilde{f}^l_{s+1}]$ as illustrated in \figurename~\ref{fig:cells3d}, 
and
$s_l^{(f)} \in \{ 0, \ldots, S_l - 1 \}$ be the index $s$ such that $f \in (\tilde{f}^l_{s},\tilde{f}^l_{s+1}]$, where $\tilde{f}_0^l \coloneqq -\infty$. 
%
%
%

Using independence of the objectives, 
we derive the following theorem (the proof is in Appendix~\ref{sec:proof-theorem2}):
\begin{theo}
 \label{thm:single-entropy-ind}
 For $L$ independent GPRs, the entropy of $p(f^l_{\*x} \mid \cD, \*f_{\*x} \preceq \cF^*)$ is given by
 \begin{align}
  & H[ p(f^l_{\*x} \mid \cD, \*f_{\*x} \preceq \cF^*) ] 
  = 
  \nonumber
  \\
  & 
  - \sum_{s = 0}^{S_l-1}
  \frac{ 
  \sum_{m \in \cM(l,s)}
  Z_{m}
  }{Z}
  \Biggl(
  \log
  \frac{ 
  \sum_{m \in \cM(l,s)}
  Z_{m}
  }{
  Z
  \sqrt{2 \pi e} \sigma_{l}(\*x) {\tilde{Z}_{sl}}
  }
  -
  \tilde{\Gamma}_{sl}
  \Biggr),
  \label{eq:single-entropy-ind}
 \end{align}
 where
 $\tilde{Z}_{sl} \coloneqq \Phi(\tilde{\alpha}_{s+1,l}) - \Phi(\tilde{\alpha}_{s,l})$ with 
 $\tilde{\alpha}_{s,l} \coloneqq (\tilde{f}^{l}_s - \mu_l(\*x))/\sigma_l(\*x)$, and
 $\tilde{\Gamma}_{sl} \coloneqq ({ \tilde{\alpha}_{s,l} \phi(\tilde{\alpha}_{s,l}) - \tilde{\alpha}_{s+1,l} \phi(\tilde{\alpha}_{s+1,l}) })/({ 2 \tilde{Z}_{sl}})$.
\end{theo}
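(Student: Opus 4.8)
The plan is to reduce the $L$-dimensional truncated density to a one-dimensional piecewise-Gaussian marginal and then evaluate the resulting scalar entropy integral directly, closely paralleling the proof of Theorem~\ref{thm:entropy-reduction}. First I would compute the marginal $p(f^l_{\*x} \mid \cD, \*f_{\*x} \preceq \cF^*)$ by integrating the PFTN over the remaining $L-1$ coordinates. Because the $L$ GPRs are independent, on each cell $\cC_m$ the joint predictive density factorizes, so integrating out the coordinates $l' \neq l$ over the cell's intervals $(\ell^{l'}_m, u^{l'}_m]$ contributes the product $\prod_{l' \neq l} Z_{ml'} = Z_m / Z_{ml}$. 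For a fixed value $f^l_{\*x} = f$ lying in the interval $(\tilde{f}^l_s, \tilde{f}^l_{s+1}]$ with $s = s_l^{(f)}$, only the cells $m \in \cM(l,s)$ meet this slice, and by the alignment assumption each such cell shares the same $l$-th marginal factor $Z_{ml} = \tilde{Z}_{sl}$. Writing $W_s \coloneqq \sum_{m \in \cM(l,s)} Z_m$, the cell sum therefore collapses to a single scaled Gaussian on each interval,
\[
 p(f^l_{\*x} = f \mid \cD, \*f_{\*x} \preceq \cF^*)
 = \frac{W_s}{Z\,\tilde{Z}_{sl}}\, p(f^l_{\*x} = f \mid \cD),
 \qquad f \in (\tilde{f}^l_s, \tilde{f}^l_{s+1}].
\]

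Second, I would substitute this piecewise form into $H = -\int p \log p$ and split the integral over the $S_l$ intervals. On interval $s$, $\log p$ separates into a constant log-scale term and the Gaussian log-density $\log p(f^l_{\*x} = f \mid \cD) = -\tfrac12\log(2\pi\sigma_l^2(\*x)) - (f-\mu_l(\*x))^2/(2\sigma_l^2(\*x))$. Integrating the constant part against $p$ over the interval uses $\int_{\tilde{f}^l_s}^{\tilde{f}^l_{s+1}} p(f^l_{\*x}=f \mid \cD)\,\mathrm{d}f = \tilde{Z}_{sl}$, so that $\int_{\tilde{f}^l_s}^{\tilde{f}^l_{s+1}} p\,\mathrm{d}f = W_s/Z$; this yields the weight $W_s/Z$ and the $\log\bigl(W_s/(Z\tilde{Z}_{sl})\bigr)$ contribution.

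Third, the Gaussian log-density term reduces to a truncated second moment. After the standardizing substitution $z = (f - \mu_l(\*x))/\sigma_l(\*x)$, I would invoke the elementary identity
\[
 \int_a^b z^2 \phi(z)\,\mathrm{d}z = a\phi(a) - b\phi(b) + \Phi(b) - \Phi(a),
\]
obtained by integration by parts from $\phi'(z) = -z\phi(z)$, with $a = \tilde{\alpha}_{s,l}$ and $b = \tilde{\alpha}_{s+1,l}$. Since $\Phi(\tilde{\alpha}_{s+1,l}) - \Phi(\tilde{\alpha}_{s,l}) = \tilde{Z}_{sl}$, this produces exactly the $\tilde{\Gamma}_{sl} + \tfrac12$ term. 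Combining the constants $-\tfrac12\log(2\pi\sigma_l^2(\*x))$ and $-\tfrac12$ into $\log(\sqrt{2\pi e}\,\sigma_l(\*x))$ then reassembles \eqref{eq:single-entropy-ind}.

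The main obstacle is the first step: correctly identifying, for each value $f$, the set of contributing cells and verifying that independence together with the interval-alignment assumption makes the $l$-th marginal factor $\tilde{Z}_{sl}$ common across all $m \in \cM(l,s)$, so the cell sum collapses to $W_s/\tilde{Z}_{sl}$ rather than a genuine mixture within an interval. Once the marginal is established as a piecewise-scaled truncated Gaussian, the remaining entropy evaluation is a routine computation identical in flavour to the $\Gamma_{ml}$ terms of Theorem~\ref{thm:entropy-reduction}.
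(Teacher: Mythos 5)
Your proposal is correct and follows essentially the same route as the paper's proof: marginalize the PFTN over the $L-1$ remaining coordinates cell by cell, use the alignment assumption to collapse the contribution on each interval $(\tilde{f}^l_s,\tilde{f}^l_{s+1}]$ to a single scaled Gaussian with weight $\sum_{m\in\cM(l,s)} Z_m/(Z\tilde{Z}_{sl})$, and then split the entropy integral over the intervals. The only cosmetic difference is that you re-derive the truncated-normal entropy from the second-moment identity $\int_a^b z^2\phi(z)\,\mathrm{d}z = a\phi(a)-b\phi(b)+\Phi(b)-\Phi(a)$, whereas the paper cites the known closed form; these are the same computation.
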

\noindent
As shown in this theorem, even for the decoupled case, we obtain an analytical representation of the entropy.
Although the equation may look complicated at first sight, this can be easily calculated from the predictive distribution if the cell-based partitioning is given.

The computation for the acquisition function of the decoupled setting is similar to the coupled case.
We first sample $\cF^*$ from the current GPR model, and then, creating the cell partitioning for each sampled Pareto-frontier.
The partitioning shown in \figurename~\ref{fig:cells3d} can also be created from the QHV partitioning.
For each interval $(\tilde{f}_{s}^{l},\tilde{f}_{s+1}^{l}]$, if a cell $\cC$ created by QHV contains this interval, we extract a sub-cell $\cC' \subseteq \cC$ in which only the interval of the $l$-the dimension of $\cC$ is replaced with $(\tilde{f}_{s}^{l},\tilde{f}_{s+1}^{l}]$.
This procedure increases the total number of cells at most $|\cF^*|$ times which we usually set a small value ($50$ in this paper).
After the partitioning, for a given predictive distribution of GPR, the acquisition function \eq{eq:acq-single} can be simply calculated with $O(M |\cF^*| L)$ by using \eq{eq:single-entropy-ind}.

\section{Related Work}
\label{sec:related-work}

For the black-box MOO problem, the combination of scalarization and evolutionary computations have been quite popular \citep[e.g.,][]{Knowles2006-ParEGO,Zhang2010-Expensive}.
In particular, ParEGO \citep{Knowles2006-ParEGO} has been widely known for its outstanding performance.
The scalarization approach transforms MOO into a single-objective problem by which the Pareto-optimal solutions can be obtained under the certain regularity conditions. 
However, acquisition functions for the transformed single-objective are expected to be suboptimal.
Although recently, some studies \citep{Paria2018-Flexible, Marban2017-Multi} have explored extensions of scalarization for identifying a specific subset of Pareto-frontier, we only focus on identifying the entire Pareto-frontier in this paper.

Acquisition functions in usual BO have been extended to MOO.
An extension of standard \emph{expected improvement} (EI) to MOO considers EI of Pareto hyper-volume \citep{Emmerich2005-Single}, which we call \emph{expected hyper-volume improvement} (EHI).
%
%
Further, \citet{Shah2016-Pareto} extended EHI to correlated objectives
(Although we only focus on the independent case in this paper, Appendix~\ref{app:correlated} shows that our PFES can be extended to the correlated case).
Although EI is a widely accepted criterion, it only measures the local utility.
\emph{Upper confidence bound} (UCB) is another well-known acquisition function for BO \citep{Srinivas2010-Gaussian}.
SMSego \citep{Ponweiser2008-Multiobjective} is one of UCB based approaches to MOO which optimistically evaluates the hyper-volume.
PAL and $\epsilon$-PAL \citep{Zuluaga2013-Active,Zuluaga2016-e-PAL} are another UCB approaches in which a confidence interval based evaluation of Pareto-frontier is proposed.
\citet{Shilton2018-Multi} evaluate the distance between a querying point and Pareto-frontier for defining a UCB criterion.
A common difficulty of UCB approach is its hyper-parameter which balances the effect of the uncertainty term.
Although there often exist theoretical suggestions for determining it, careful tuning is necessary in practice since those suggested values usually contain some unknown constant.

%
As another uncertainty based approach, \citet{Campigotto2014-Active} considers \emph{uncertainty sampling} for directly modeling Pareto-frontier as a function.
%
Although the simplest uncertainty sampling only measures local uncertainty at a querying point, global uncertainty measures have also been studied.
%
SUR \citep{Picheny2015-Multiobjective} considers the expected decrease of \emph{probability of improvement} (PI) as a measure of uncertainty reduction.
%
However, SUR is computationally extremely expensive, because the PI after a querying point is added to the training set is needed to be integrated over the entire $\cX$.
%
According to \citep{Hernandez-lobatoa2016-Predictive}, SUR is feasible only $2$ or $3$ objectives.
%
%
Further, scalability for the dimension of the input space $\cX$ is also severely limited because of the required numerical integration in the entire $\cX$.

The effectiveness of information-based approaches have been shown for single objective BO \citep{Henning2012-Entropy,Hernandez2014-Predictive,Wang2017-Max}
.
In MOO, a seminal work on this direction is \emph{predictive entropy search for multi-objective optimization} (PESMO) \citep{Hernandez-lobatoa2016-Predictive}.
PESMO considers the entropy of a set of Pareto optimal $\*x$, called Pareto set $\cX^*$.
Similar to PFES, PESMO first samples $\cX^*$ from the current model.
However, unlike our PFES, the entropy in PESMO is not reduced to an analytical formula.
An approximation with \emph{expectation propagation} (EP) \citep{Minka2001-Expectation} was proposed, but this results in that the each dimension of the conditional density
$p(\*f_{\*x} \mid \cD, \cX^*)$ 
is approximated by the independent Gaussian distribution, whose accuracy and appropriateness are not clarified.
Further, the computational procedure of this approximation is highly complicated.
By contrast, in our PFES, PFTN $p(\*f_{\*x} \mid \cD, \*f_{\*x} \preceq \cF^*)$ and its entropy can be written analytically, and thus, the dependent relation in this density is incorporated into the acquisition function evaluation.
From \figurename~\ref{fig:PFTN} (b), we can clearly see that $p(\*f_{\*x} \mid \cD, \*f_{\*x} \preceq \cF^*)$ can have dependent relation among $\*f_{\*x}$ nevertheless the original GPR is assumed to be independent.
Although PESMO is an only method which deals with the decoupled setting, their acquisition function, which is derived by simply decomposing the information gain of the coupled setting, was not rigorously justified as the entropy.
\emph{Max-value entropy search for multi-objective optimization} (MESMO) \citep{Belakaria2019-Max} is another information-based MOO which uses the entropy of the max-values of each dimension $l = 1, \ldots, L$.
MESMO is inspired by \emph{max-value entropy search} (MES) of single-objective BO \citep{Wang2017-Max} which considers the entropy of the optimal output $\max_{\*x} f(\*x)$.
This approach drastically simplifies the calculation, but obviously in MOO, Pareto-frontier is not constructed only by the max-value of each axis, and thus, $\*f_{\*x} \in \cF^*$ which does not have any max-values is not preferred by this criterion. 
For example, although the red star point in \figurename~\ref{fig:pareto-entropy} (a) is not the maximum in the both axes, this point largely improves the Pareto-frontier created by the already observed points (white circles).
PFES is also closely related to MES in a sense that the entropy of the output space is used to measure the information gain.

\section{Experiments}
\label{sec:}

We compared PFES with ParEGO, EHI, SMSego, and MESMO.
%
To evaluate performance, we used the hyper-volume of the region dominated by Pareto-frontier, which is a standard evaluation measure in MOO. 
For the kernel function in all the methods, we employed the Gaussian kernel 
$k(\*x,\*x^\prime) = \exp( - \| \*x - \*x^\prime \|_2^2 / (2 \sigma^2))$.
The samplings of $\cF^*$ in PFES and $\cX^*$ in MESMO, which we call \emph{Pareto sampling}, were performed $10$ times, respectively.
For the cell partitioning of PFES, we used the QHV algorithm \citep{Russo2014-Quick}.
For the acquisition function maximization of all methods, we used the DIRECT algorithm \citep{Jones1993-Lipschitzian}.
The performance is evaluated by the hyper-volume created by already observed instances relative to the optimal hyper-volume, which we call \emph{relative hyper-volume} (RHV).
The other experimental settings are shown in Appendix~\ref{app:settings}.
Because of implementation and computational complexity issues, we could not perform comparison with PESMO and SUR while they provide the global measures of utility for MOO (the author implementation was not compatible with our environment).
We believe that the above compared methods are currently widely used, and thus, would be sufficient as the baseline to verify the performance of PFES.

\subsection{Benchmark Functions}
\label{sec:benchmark}

We first used benchmark functions which have continuous domain $\cX$.
Each experiment run $10$ times with a different set of initial observations which were randomly selected $5$ points.
Here, we consider the coupled setting.
For Pareto sampling, NSGA-II was applied to functions generated from RFM with $500$ basis functions, and we set the maximum size of Pareto set as $50$ by following \citep{Hernandez-lobatoa2016-Predictive}.
The results on four MOO problems are shown in \figurename~\ref{fig:benchmark}.
In the figure, (a) Ackley/Sphere is created by combining two single objective benchmark functions $L = 2$ with $d = 2$ \citep{Surjanovic2013-Virtual}, and (b) - (d) are from well-known MOO benchmark functions \citep{Huband2006-Review}.
ZDT4 has two objectives $L = 2$ and the input dimension is $d = 4$.
DTLZ3 and 4 have four objectives $L = 4$ and the input dimension is $d = 6$.
To calculate RHV, the optimal hyper-volume is estimated by applying NSGA-II to the true objective function.
Here, in PFES, we evaluate the three settings of the number of samplings $|\mathrm{PF}| = 10, 30$, and $50$.

\figurename~\ref{fig:benchmark} shows the results.
We see that PFES (any of 10, 30 or 50) achieved the fastest convergence for Ackley/Sphere, DTLZ3, and DTLZ4, and for DTLZ3, PFES was roughly the second best among the compared methods.
The three settings of $|\mathrm{PF}|$ showed similar behaviors, suggesting that the performance dependence of PFES on $|\mathrm{PF}|$ is small.
MESMO showed the faster convergence on ZDT4, while for the two MOO benchmark functions DTLZ3 and DTLZ4, it was relatively slow.
Among the three MOO benchmark functions, only ZDT4 has the ``convex'' dominated region $\cF$, while DTLZ3 and DTLZ4 have the ``concave'' $\cF$ \citep[see][for the detail]{Huband2006-Review}.
The stronger trade-off relation exists in the concave case, for which MESMO failed to improve RHV rapidly.

\setlength{\textfloatsep}{10pt}
\begin{figure}[t]
 \centering 
 \subfloat[Ackley/Sphere]{
 \igr{0.24}{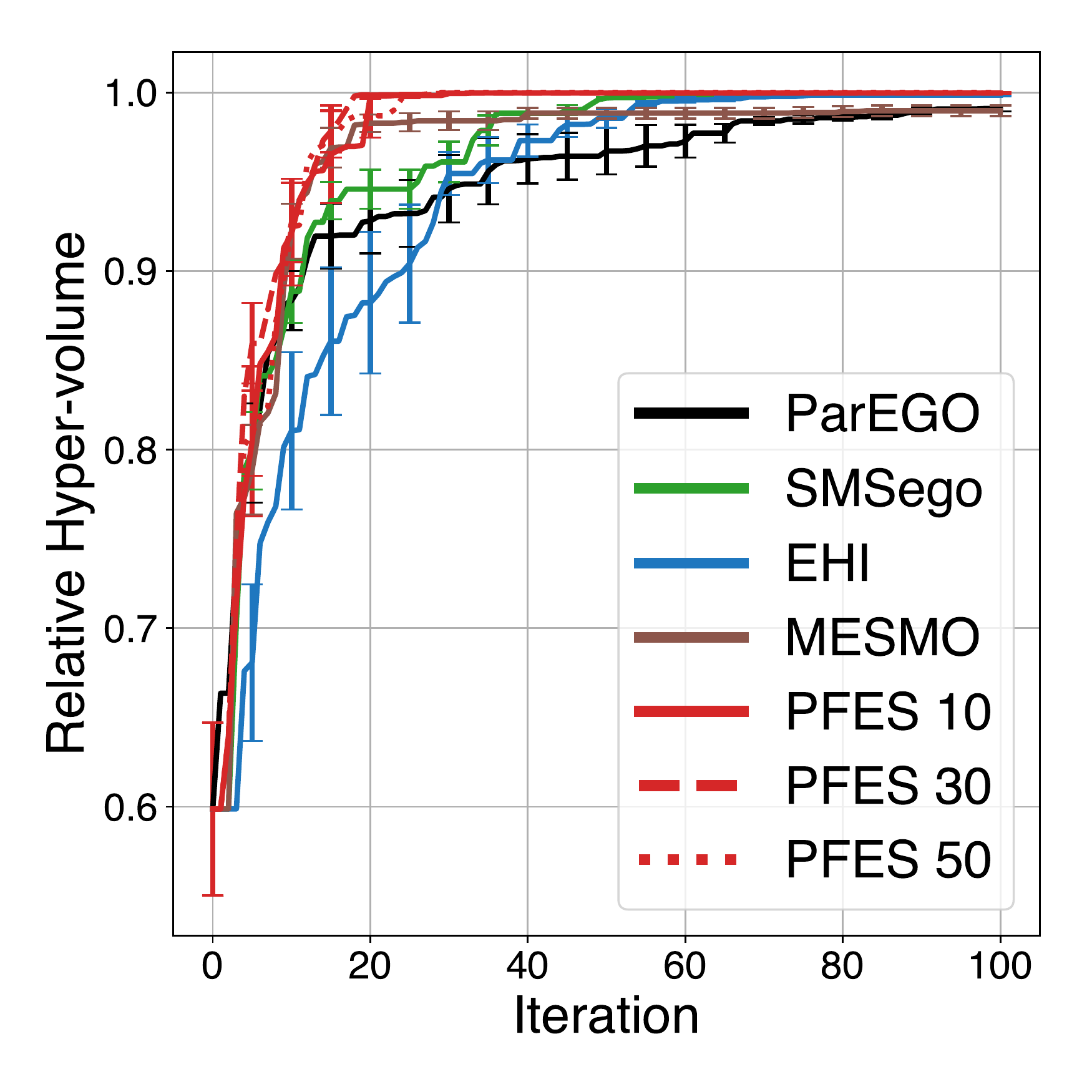}
 }
 \subfloat[ZDT4]{
 \igr{0.24}{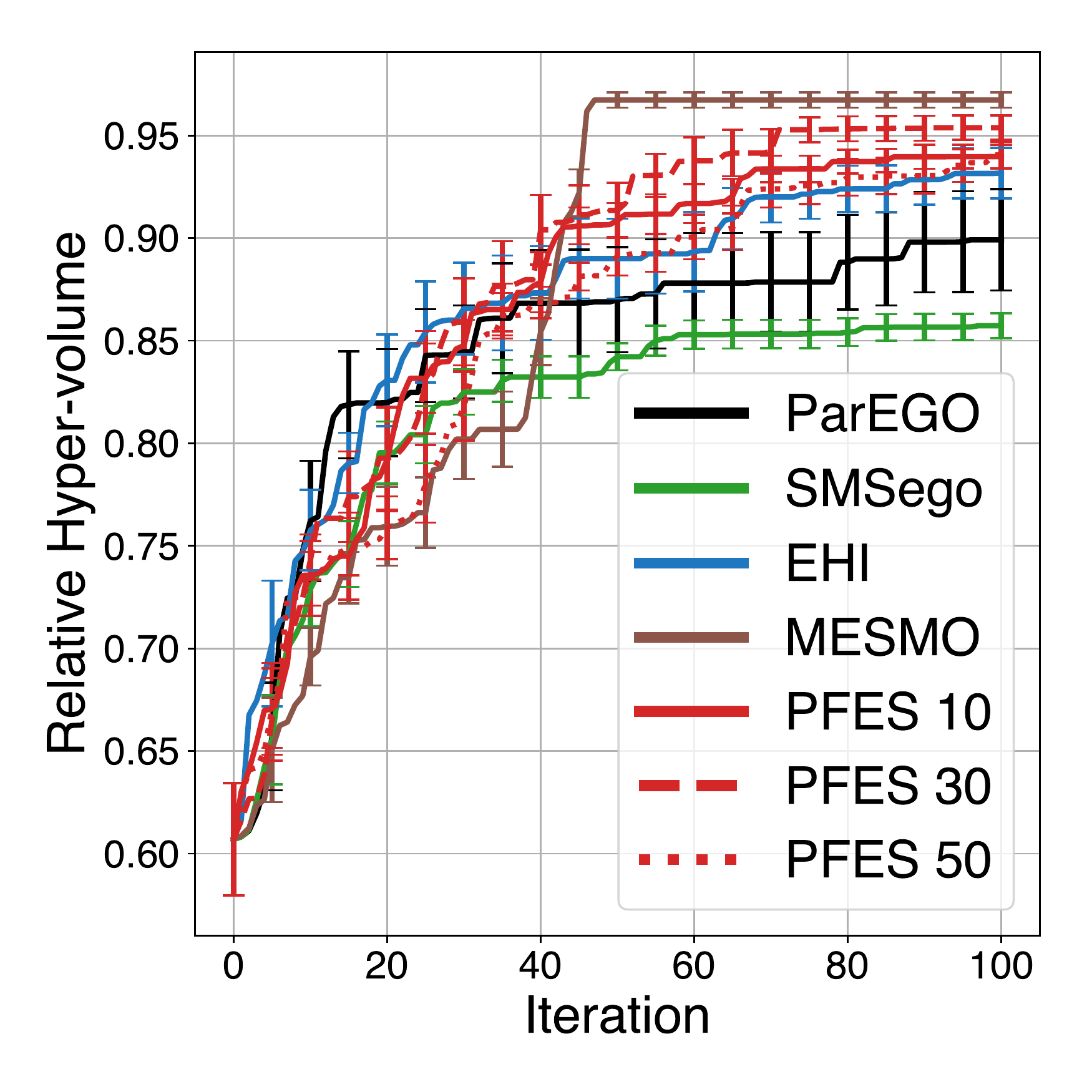}
 }

 \subfloat[DTLZ3]{
 \igr{0.24}{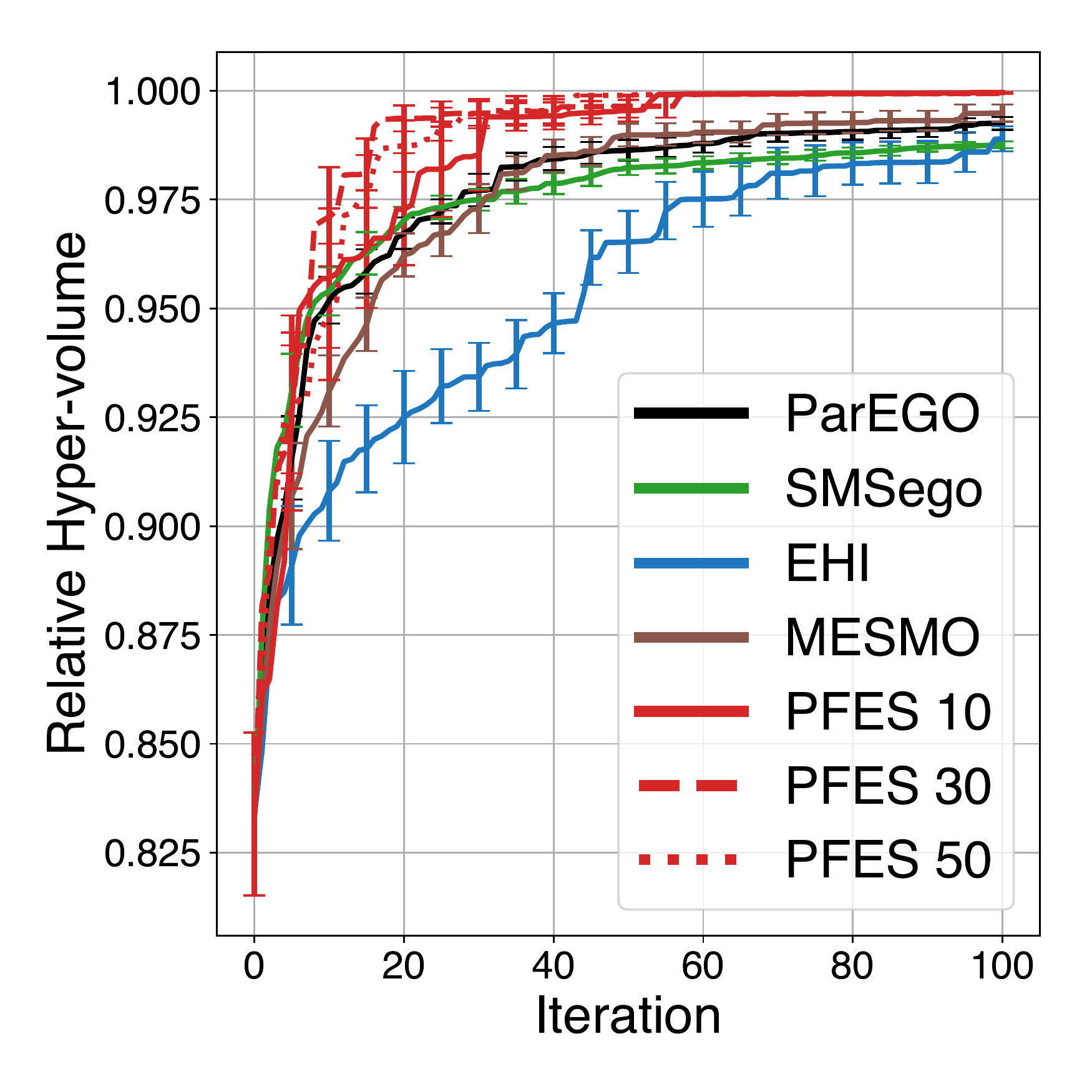}
 }
 \subfloat[DTLZ4]{
 \igr{0.24}{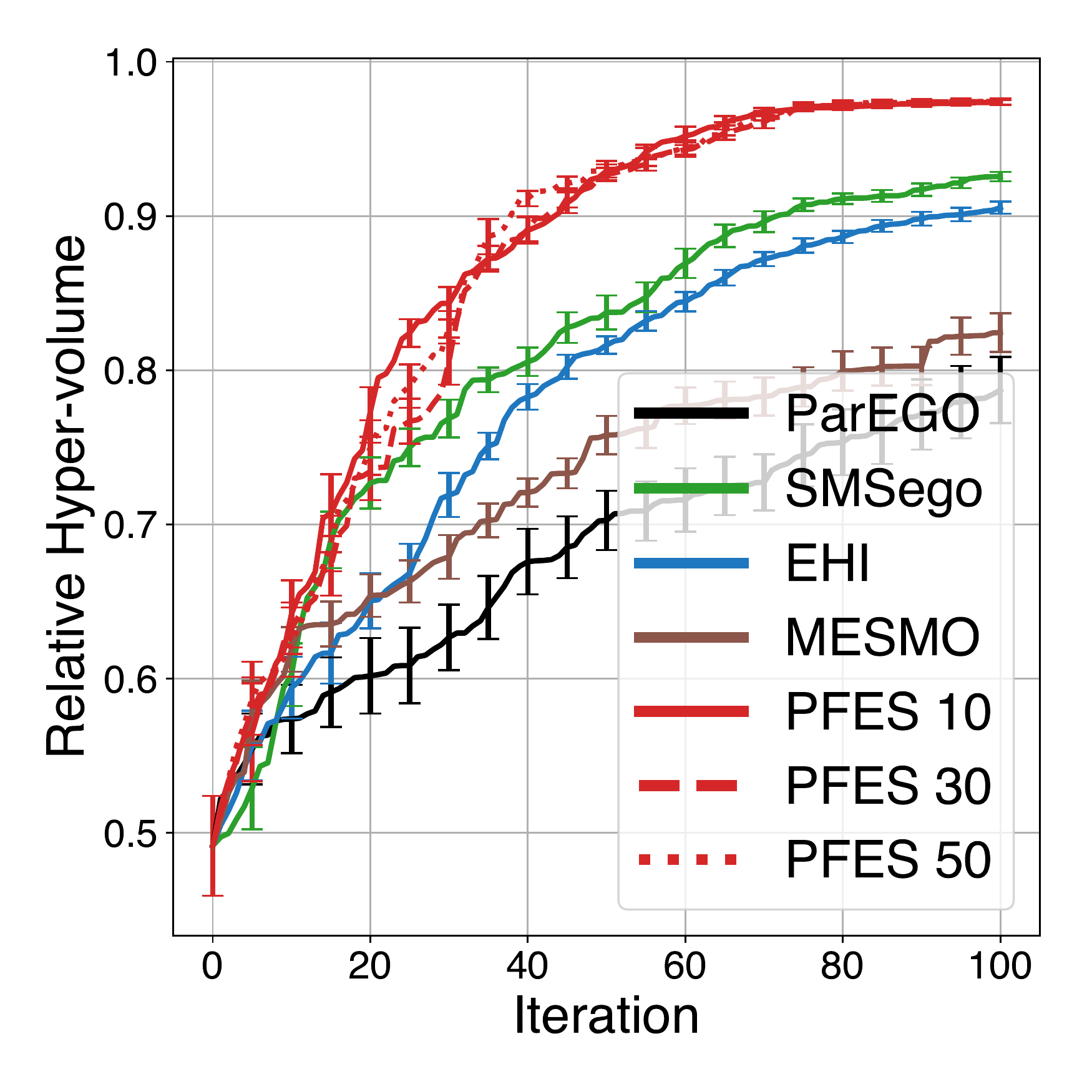}
 }
 \caption{
 Performance comparison on benchmark problems (average and standard error of $10$ runs).
 }
 \label{fig:benchmark} 
\end{figure}


We also examined the computational time for the acquisition function evaluation on DTLZ4 which has the largest output dimension $L = 4$ in our four benchmark functions.
We randomly selected $50$ training instances and also randomly selected $100$ candidate $\*x$ to evaluate the acquisition functions.
The average of $10$ runs of this procedure is shown in \tablename~\ref{tab:time} (a).
ParEGO and SMSego are fast because their acquisition functions are simple.
Although EHI took relatively long time, this is mainly because we employed the na{\"i}ve cell partitioning described in \citep{Shah2016-Pareto}.
MESMO and PFES are similar computational times. 
\tablename~\ref{tab:time} (b) shows the detailed elapsed time in PFES.
We see that the most of time was spent by NSGA-II in this case. 
The amount of QHV and the entropy calculation is quite small, and \#cells is less than 1,000 even in this $L = 4$ problem which is a middle-large sized MOO problem because a problem $L \geq 4$ is sometimes called a ``many-objective'' problem in the context of MOO \citep{Chand2015-Evolutionary}.
Since MESMO also employed NSGAII \citep{Belakaria2019-Max}, MESMO and PFES showed the similar result.
%
%
%
%
%
%
%
We further report with other settings in Appendix~\ref{app:time}.

\setlength{\textfloatsep}{10pt}
\begin{table}[t]
 \centering
 \caption{
 Computational time for acquisition function evaluation for $100$ points on DTLZ4.
 }
 \label{tab:time}
 \subfloat[Comparison of five methods (sec)]{
 \begin{tabular}{|c|c|c|c|c|}
ParEGO  & SMSego & EHI & MESMO & PFES \\ \hline
0.53  & 6.32 & 317.55 & 59.90 & 62.36 \\
 \end{tabular}}

 \subfloat[Details of PFES (sec, except for \# cells)]{
 \begin{tabular}{|c||c|c|c|c||c|}
  Total  & RFM & NSGA-II & QHV & Entropy & \#cells \\ \hline
  62.36  & 0.11 & 59.85  & 1.28 & 1.13 & 647.86 \\
 \end{tabular}}
\end{table}

\subsection{Decoupled Setting with Materials Data}

For evaluating the decoupled acquisition function, we used two real-world datasets from \emph{computational materials science}.
In this field, efficient exploration of materials is strongly demanded because accurate physical simulations are often computationally extremely expensive, in which simulations taking more than several days are common.
The task is to explore crystal structures achieving high ion-conductivity and stability (i.e., $L = 2$), which are desirable properties for battery materials.
For these datasets, $\cX$ is a pre-defined discrete set, meaning that we have the fixed number of candidates (the pooled setting).
Details of the two datasets, called Bi$_2$O$_3$ and LLTO, are as follows:
\vspace{-1em}
\begin{description}
 \setlength{\itemsep}{0pt}
 \setlength{\parskip}{0pt}
 \item[Bi$_2$O$_3$] The size of candidates is $|\cX| = 335$, generated by the composition Bi$_{1-x-y-z}$Er$_x$Nb$_y$W$_z$O$_{48+y+3/2z}$.
	    The input is the three dimensional space defined by $x$, $y$, and $z$.
 \item[LLTO] The size of candidates is $|\cX| = 1119$, generated by the crystal called Perovskite type La$_{2/3-x}$Li$_{3x}$TiO$_3$ for $x = 0.11$.
	    In each candidate, positions of each one of atoms are permuted. 
	    The $2185$ dimensional feature vector $\*x$ is created through relative three dimensional positions of the atoms.
	    %
	    Note that although this dataset has the high dimensional input space, BO is feasible because $\cX$ is the pre-defined discrete set.
\end{description}
\vspace{-1em}
The objective functions are ion-conductivity $f^1_{\*x}$ and stability $f^2_{\*x}$ (negative of the energy), which can be observed through physical simulation models, separately.
The Bi2O3 and LLTO data are collected based on quantum- and classical- mechanics, respectively.
%
In the both cases, ion-conductivity is more expensive because it requires time-consuming simulations for observing dynamics of the ion.
%
Here, we examine the two cost settings  
$(\lambda_1, \lambda_2) = (5, 1)$
and
$(\lambda_1, \lambda_2) = (10, 1)$, 
based on the prior knowledge of the domain experts.
%
%
%
In these datasets, PFES directly generated function values of GPR without RFM, from which the Pareto set can be easily sampled unlike the continuous input case.
%
%
Each experiment run $10$ times with a different set of initial observations which were randomly selected $5$ points.

\figurename~\ref{fig:Bi2O3} and \ref{fig:LLTO} show the result.
The horizontal axis of the figure is the sum of the observation cost.
In Bi$_2$O$_3$, SMSego, EHI, PFES, and PFES (decoupled) showed relatively rapid convergence, while in LLTO,  PFES (decoupled) reached the maximum first.
Interestingly, for the both datasets, the increase of PFES (decoupled) was moderate compared with the other methods in the beginning, and it was accelerated at the middle of iterations. 
PFES (decoupled) starts sampling from low cost functions because in the beginning the amount of information from the two objectives are not largely different.
After collecting cheaper information, PFES (decoupled) moves onto the expensive objectives and the faster improvement of RHV compared with the coupled PFES was finally observed in a sense of the total sampling cost.
\setlength{\textfloatsep}{10pt}
\begin{figure}[t]
 \centering 
 \subfloat[$(\lambda_1, \lambda_2) = (5, 1)$]{
 \igr{0.24}{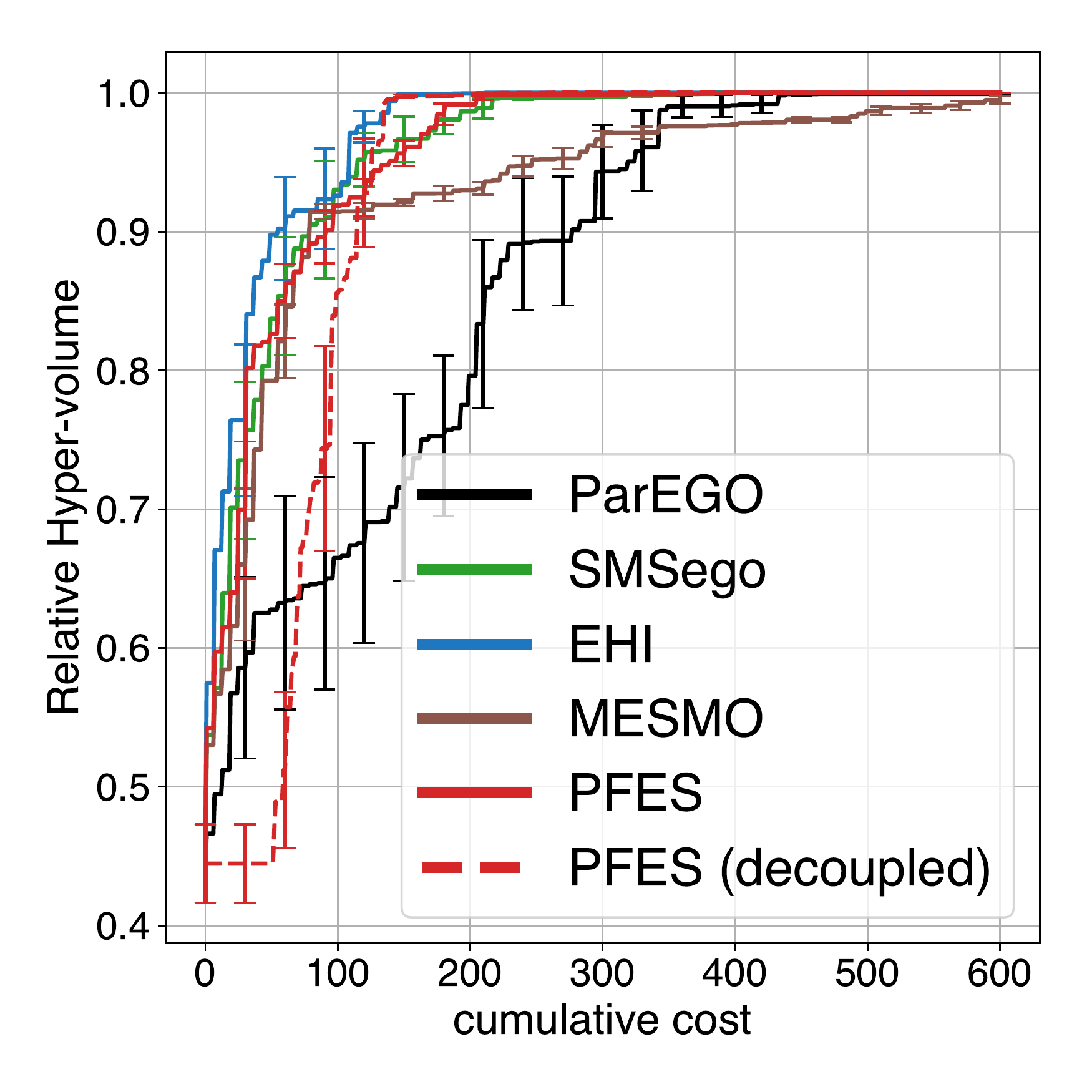}
 }
 \subfloat[$(\lambda_1, \lambda_2) = (10, 1)$]{
 \igr{0.24}{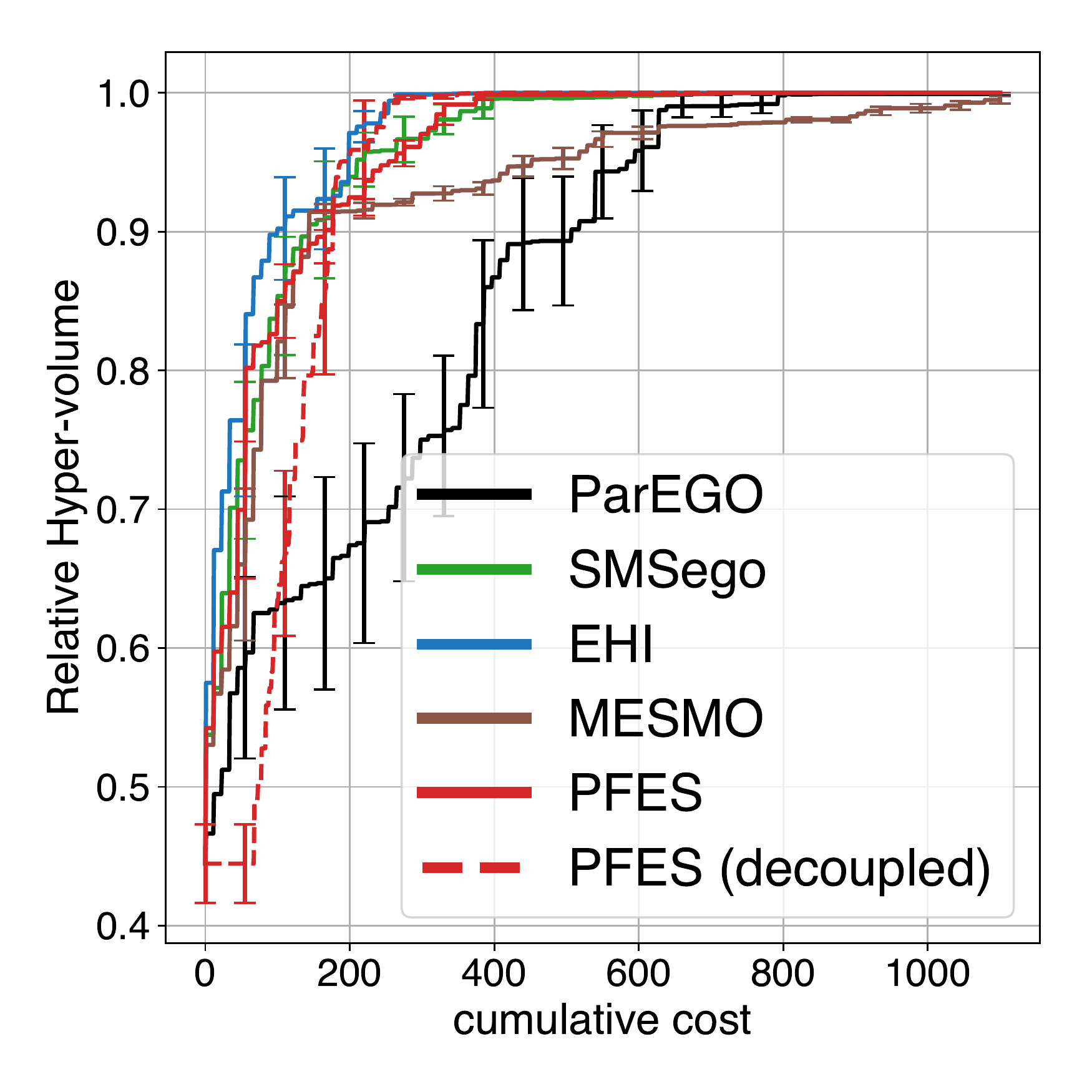}
 }
 \caption{
 RHV for Bi$_2$O$_3$.
 }
 \label{fig:Bi2O3}
 \subfloat[$(\lambda_1, \lambda_2) = (5, 1)$]{
 \igr{0.24}{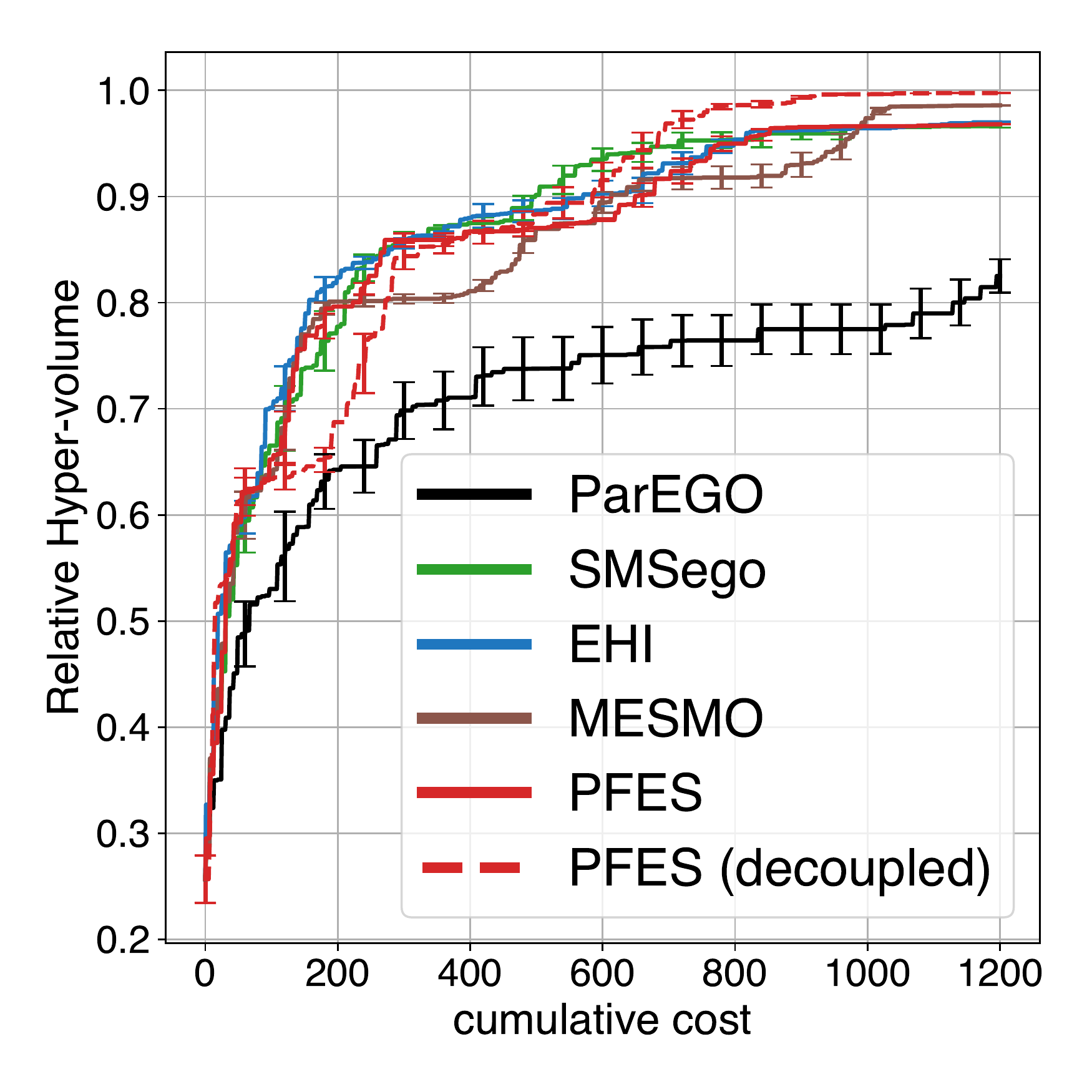}
 }
 \subfloat[$(\lambda_1, \lambda_2) = (10, 1)$]{
 \igr{0.24}{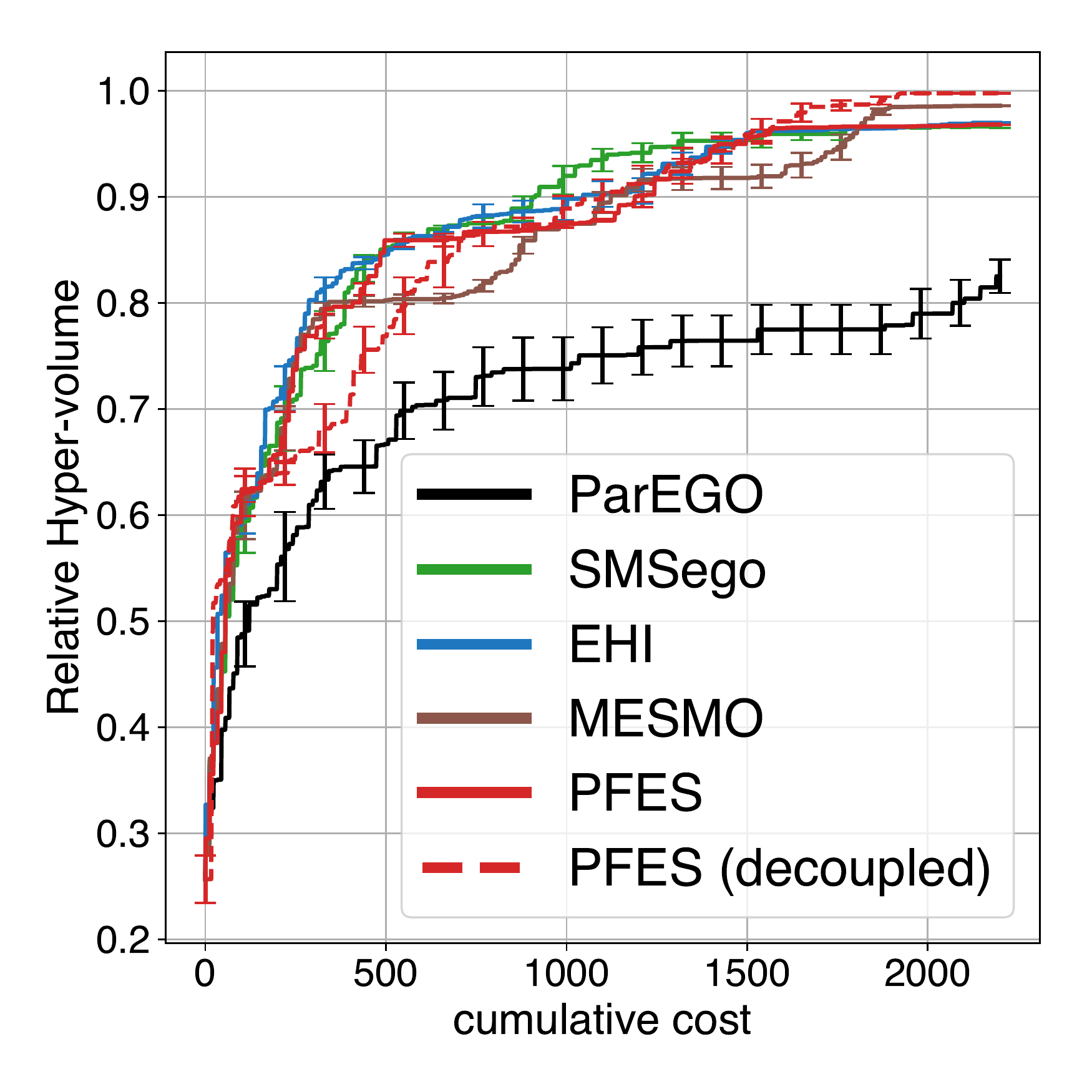}
 }
 \caption{
 RHV for LLTO.
 }
 \label{fig:LLTO}
\end{figure}

\section{Conclusion}
\label{sec:conclusion}

We proposed \emph{Pareto-frontier entropy search} (PFES) for multi-objective Bayesian optimization (MBO).
We showed that the entropy of Pareto-frontier can be simply evaluated via sampling of Pareto-frontier and the cell-based partitioning.
Further, we showed PFES for the decoupled setting through the marginalization, for which simple computations are also obtained.
Our empirical evaluation on the benchmark functions and materials science data demonstrated effective of our approach.





\bibliography{ref}
\bibliographystyle{icml2020}

\clearpage

\onecolumn 


\appendix

\section{Proof of Theorem~\ref{thm:entropy-reduction}}
\label{sec:proof-theorem1}

The normalization constant $Z$ is written as
\begin{align}
 Z \coloneqq 
 \int_{\cF} p(\*f_{\*x} \mid \cD) {\rm d} \*f_{\*x}
 =
 \sum_{m = 1}^M
 \int_{\cC_m} p(\*f_{\*x} \mid \cD)
 {\rm d} \*f_{\*x}
 = 
 \sum_{m = 1}^M
 \prod_{l = 1}^L 
 \int_{\ell^l_m}^{u^l_{m}}
 p(f^l_{\*x} \mid \cD)
 {\rm d} f^l_{\*x}	
 =
 \sum_{m = 1}^M
 Z_m.
 \label{eq:Z}  	
\end{align}
which is a sum of the Gaussian integrals in the cells.
%

First, \eq{eq:Z-decomposed} is immediately derived from the independence of $L$ GPRs.
Let
$\cF \coloneqq \{  \*f_{\*x} \mid \*f_{\*x} \preceq \cF^* \}$.
Using
\begin{align*}
 p(\*f_{\*x} \mid \cD, \*f_{\*x} \in \cF) =
 \begin{cases}
  \frac{1}{Z}
  p(\*f_{\*x} \mid \cD), & 
  \text{ if } \*f_{\*x} \in \cF, \\
  0, & \text{ otherwise },
 \end{cases}
\end{align*}
we see
\begin{align}
 H[p(\*f_{\*x} \mid \cD, \*f_{\*x} \in \cF)]
 &=
 - \int_{\cF}
 \frac
 {p(\*f_{\*x} \mid \cD) }
 {Z}
 \log 
 \frac{p(\*f_{\*x} \mid \cD) }{Z}
 {\rm d} \*f_{\*x} 
 \nonumber
 \\
 &=
 - Z^{-1}
 \int_{\cF}
 p(\*f_{\*x} \mid \cD) 
 \log 
 p(\*f_{\*x} \mid \cD) 
 {\rm d} \*f_{\*x} 
 +
 Z^{-1} \log Z
 \int_{\cF}
 p(\*f_{\*x} \mid \cD) 
 {\rm d} \*f_{\*x}  
 \nonumber
 \\
 &=
 - Z^{-1}
 \int_{\cF}
 p(\*f_{\*x} \mid \cD) 
 \log 
 p(\*f_{\*x} \mid \cD) 
 {\rm d} \*f_{\*x} 
 +
 \log Z
 \nonumber
 \\
 &=
 - Z^{-1}
 \sum_{m = 1}^M
 \int_{\cC_m}
 p(\*f_{\*x} \mid \cD) 
 \log 
 p(\*f_{\*x} \mid \cD) 
 {\rm d} \*f_{\*x} 
 +
 \log Z
 \label{eq:conditional-entropy-ind}
\end{align}
Based on the independence of $\*f_{\*x}$, the integral of the first term can be transformed into
\begin{align}
 & 
 \sum_{m = 1}^M
 \int_{\cC_m}
 p(\*f_{\*x} \mid \cD) 
 \log 
 p(\*f_{\*x} \mid \cD) 
 {\rm d} \*f_{\*x} 
 \nonumber
 \\ 
 & =
 \sum_{m = 1}^M
 \int_{\ell^1_m}^{u^1_m}
 \int_{\ell^2_m}^{u^2_m}
 \cdots
 \int_{\ell^L_m}^{u^L_m}
 \prod_{l' = 1}^L p(f^{l'}_{\*x} \mid \cD)
 \left(
 \sum_{l = 1}^L
 \log 
 p(f^l_{\*x} \mid \cD)
 \right)
 {\rm d} f^L_{\*x}	
 \cdots
 {\rm d} f^1_{\*x}
 \nonumber
 \\ 
 & =
 \sum_{m = 1}^M
 \sum_{l = 1}^L
 \int_{\ell^1_m}^{u^1_m}
 \int_{\ell^2_m}^{u^2_m}
 \cdots
 \int_{\ell^L_m}^{u^L_m}
 \prod_{l' = 1}^L p(f^{l'}_{\*x} \mid \cD)
 \log 
 p(f^l_{\*x} \mid \cD)
 {\rm d} f^L_{\*x}	
 \cdots
 {\rm d} f^1_{\*x}
 \nonumber
 \\ 
 & =
 \sum_{m = 1}^M
 \sum_{l = 1}^L
 \left[
 \left(
 \int_{\ell^l_m}^{u^l_m}
 p(f^l_{\*x} \mid \cD)
 \log 
 p(f^l_{\*x} \mid \cD)
 {\rm d} f^l_{\*x}	
 \right)
 \left(
 \prod_{l' \neq l} 
 \int_{\ell^{l'}_m}^{u^{l'}_m}
 p(f_{l'}(\*x) \mid \cD)
 {\rm d} f_{l'}(\*x)	
 \right)
 \right]
 \nonumber
 \\ 
 & =
 \sum_{m = 1}^M
 \sum_{l = 1}^L
 \left[
 Z_{ml}
 \int_{\ell^l_m}^{u^l_m}
 \left(
 \frac{p(f^l_{\*x} \mid \cD)}	      
 {Z_{ml}}
 \log 
 \frac{p(f^l_{\*x} \mid \cD)}	     
 {Z_{ml}}
 +
 \frac{p(f^l_{\*x} \mid \cD)}	      
 {Z_{ml}}
 \log Z_{ml}
 \right)
 {\rm d} f^l_{\*x}	
 \prod_{l' \neq l} 
 Z_{m l'}
 \right]
 \nonumber
 \\ 
 & =
 \sum_{m = 1}^M
 \sum_{l = 1}^L
 \left[	       
 \Biggl(
 Z_{ml}
 \underbrace{
 \int_{\ell^l_m}^{u^l_m}
 \frac{p(f^l_{\*x} \mid \cD)}	      
 {Z_{ml}}
 \log 
 \frac{p(f^l_{\*x} \mid \cD)}	      
 {Z_{ml}}
 {\rm d} f^l_{\*x}	
 }_{(\star)} 
 +
 Z_{ml}
 \log Z_{ml}
 \Biggr)
 \prod_{l' \neq l} 
 Z_{m l'}
 \right]	       
 \label{eq:unnormalized-entropy}
\end{align}
The term indicated by $\star$ is the negative entropy of the truncated normal distribution.
For the entropy of the truncated normal distribution, analytical formula is available \citep[e.g,][]{Michalowicz2013-Handbook}, by which we can obtain
\begin{align*}
 \int_{\ell^l_m}^{u^l_m}
 \frac{p(f^l_{\*x} \mid \cD)}	      
 {Z_{ml}}
 \log 
 \frac{p(f^l_{\*x} \mid \cD)}	      
 {Z_{ml}}
 {\rm d} f^l_{\*x}	 
 =
 -
 \log (\sqrt{2 \pi e} \sigma_l(\*x) Z_{ml})
 -
 \frac
 {\ubar{\alpha}_{m,l} \phi(\ubar{\alpha}_{m,l}) - \bar{\alpha}_{m,l} \phi(\bar{\alpha}_{m,l})}
 {2 Z_{ml}}.
\end{align*}
%
Then, the above equation \eqref{eq:unnormalized-entropy} is further transformed into	     
\begin{align*}
 &
 \sum_{m = 1}^M
 \sum_{l = 1}^L
 \left[	       
 \Biggl(
 Z_{ml}
 \Bigl(-
 \log (\sqrt{2 \pi e} \sigma_l(\*x) Z_{ml})
 -
 \frac
 {\ubar{\alpha}_{m,l} \phi(\ubar{\alpha}_{m,l}) - \bar{\alpha}_{m,l} \phi(\bar{\alpha}_{m,l})}
 {2 Z_{ml}}
 \Bigr)
 +
 Z_{ml}
 \log Z_{ml}
 \Biggr)
 \prod_{l' \neq l} 
 Z_{m l'}
 \right]	       	       
 \\
 &=
 \sum_{m = 1}^M
 \sum_{l = 1}^L
 \left[	       
 \Biggl(
 -
 Z_{ml}
 \log (\sqrt{2 \pi e} \sigma_l(\*x))
 -
 \frac
 {\ubar{\alpha}_{m,l} \phi(\ubar{\alpha}_{m,l}) - \bar{\alpha}_{m,l} \phi(\bar{\alpha}_{m,l})}
 {2}
 \Biggr)
 \prod_{l' \neq l} 
 Z_{m l'}
 \right].	       	    
\end{align*}
Substituting this into \eqref{eq:conditional-entropy-ind}, we obtain
\begin{align*}
 & H[p(\*f_{\*x} \mid \cD, \*f_{\*x} \in \cF)]
 \\
 & =
 - Z^{-1}
 \sum_{m = 1}^M \sum_{l = 1}^L
 \left[	\Biggl(
 -
 Z_{ml}
 \log (\sqrt{2 \pi e} \sigma_l(\*x))
 -
 \frac
 {\ubar{\alpha}_{m,l} \phi(\ubar{\alpha}_{m,l}) - \bar{\alpha}_{m,l} \phi(\bar{\alpha}_{m,l})}
 {2}
 \Biggr)
 \prod_{l' \neq l} 
 Z_{m l'}
 \right]	       	       
 +
 \log Z 
 \\
 & =
 Z^{-1} \sum_{m = 1}^M \prod_{l'=1}^L Z_{ml'}
 \sum_{l = 1}^L
 \log
 \left(
 \sqrt{2 \pi e} \sigma_l (\*x)
 \right)
 + 
 Z^{-1}
 \sum_{m = 1}^M
 \sum_{l = 1}^L
 \frac
 {\ubar{\alpha}_{m,l} \phi(\ubar{\alpha}_{m,l}) - \bar{\alpha}_{m,l} \phi(\bar{\alpha}_{m,l})}
 {2}
 \prod_{l' \neq l} 
 Z_{m l'}
 +
 \log Z
 \\
 & =
 \log
 \left(
 (\sqrt{2 \pi e})^L
 Z
 \prod_{l=1}^L \sigma_l(\*x) 
 \right)
 +
 Z^{-1}
 \sum_{m = 1}^M
 \sum_{l = 1}^L
 \prod_{l' \neq l} Z_{m l'}
 \frac
 {\ubar{\alpha}_{m,l} \phi(\ubar{\alpha}_{m,l}) - \bar{\alpha}_{m,l} \phi(\bar{\alpha}_{m,l})}
 {2}	      
 \\
 & =
 \log
 \left(
 (\sqrt{2 \pi e})^L
 Z
 \prod_{l=1}^L \sigma_l(\*x) 
 \right)
 +
 \sum_{m = 1}^M
 \frac{Z_{m}}{Z}
 \sum_{l = 1}^L
 \frac
 {\ubar{\alpha}_{m,l} \phi(\ubar{\alpha}_{m,l}) - \bar{\alpha}_{m,l} \phi(\bar{\alpha}_{m,l})}
 {2 Z_{ml}}.	      
\end{align*}

\section{Proof of Theorem~\ref{thm:single-entropy-ind}}
\label{sec:proof-theorem2}

The marginalization can be represented as
\begin{align}
 & 
 p(f^l_{\*x} \mid \cD, \*f_{\*x} \preceq \cF^*) 
 \nonumber
 \\
 & =
 \sum_{m \in \cM(l,s_l^{(f^l_{\*x})})}
 \int_{\cC_{m}^{{\setminus l}}}
 \frac{p(\*f_{\*x} \mid \cD)}{Z}
 {\rm d} \*f^{{\setminus l}}_{\*x} 
 \nonumber
 \\
 & =
 \frac{p(f_{\*x}^l \mid \cD)}{Z}
 \sum_{m \in \cM(l,s_l^{(f^l_{\*x})})}
 \int_{\cC_{m}^{{\setminus l}}}
 p(\*f^{{\setminus l}}_{\*x} \mid f_{\*x}^l, \cD) 
 {\rm d} \*f^{{\setminus l}}_{\*x},		
 \label{eq:marginal-given-frontier}
\end{align}
where 
$\cC_{m}^{{\setminus l}}$ is the $(L-1)$-dimensional cell created by eliminating the $l$-th dimension of $\cC_{m}$, and $\*f^{{\setminus l}}_{\*x}$ is a subvector of $\*f_{\*x}$ without the $l$-th dimension.

The marginal distribution of $f_{\*x}^l$ can be partitioned into an interval $f^l_{\*x} \in (\tilde{f}_l^m, \tilde{f}_l^{m+1}]$ as shown in \eqref{eq:marginal-given-frontier}, which can be further transformed into
\begin{align*}
 &
 p(f^l_{\*x} \in (\tilde{f}_l^s, \tilde{f}_l^{s+1}] \mid \cD, \*f_{\*x} \preceq \cF^*) 
 \\ 
 &=
 \sum_{m' \in \cM(l,s_l^{(f_{\*x}^l)}) }
 \int_{\cC_{m'}^{{\setminus l}}}
 p(\*f_{\*x} \mid \cD, \*f_{\*x} \preceq \cF^*) 
 {\rm d} \*f_{{\setminus l}}(\*x) 
 \\ 
 &=
 \frac{1}{Z}
 \sum_{m' \in \cM(l,s_l^{(f_{\*x}^l)}) }
 \int_{\cC_{m'}^{{\setminus l}}}
 p(\*f_{\*x} \mid \cD) 
 {\rm d} \*f_{{\setminus l}}(\*x)	
 \\ 
 &=
 \frac{1}{Z}
 \sum_{m' \in \cM(l,s_l^{(f_{\*x}^l)}) }
 \int_{\cC_{m'}^{{\setminus l}}}
 \prod_{l' \neq l} p(f_{l'}(\*x) \mid \cD) 
 p(f^l_{\*x} \mid \cD) 
 {\rm d} \*f_{{\setminus l}}(\*x)	
 \\ 
 &=
 \frac{1}{Z}
 p(f^l_{\*x} \mid \cD) 
 \sum_{m' \in \cM(l,s_l^{(f_{\*x}^l)}) }
 \prod_{l' \neq l}
 (
 \Phi(\bar{\alpha}_{m',l'}) - \Phi(\ubar{\alpha}_{m',l'})
 )
 \\ 
 &=
 \frac{ 
 \sum_{m' \in \cM(l,s_l^{(f_{\*x}^l)}) }
 \prod_{l' \neq l} Z_{m'l'}
 }{Z}
 p(f^l_{\*x} \mid \cD) 
\end{align*}      
Let
$\tilde{f}_l^{0} = -\infty$, 
for convenience.
Then, the entropy is 
\begin{align}
 & 
 H[ p(f^l_{\*x} \mid \cD, \*f_{\*x} \preceq \cF^*) ]
 \nonumber
 \\ 
 & =
 - \int_{-\infty}^{\infty} 
 p(f^l_{\*x} \mid \cD, \*f_{\*x} \preceq \cF^*)
 \log
 p(f^l_{\*x} \mid \cD, \*f_{\*x} \preceq \cF^*)
 {\rm d} f^l_{\*x} 
 \nonumber
 \\ 
 & =
 \sum_{s = 0}^{S_l-1}
 \int_{\tilde{f}_l^{s}}^{\tilde{f}_l^{s+1}}
 p(f^l_{\*x}  \mid \cD, \*f_{\*x} \preceq \cF^*)
 \log
 p(f^l_{\*x}  \mid \cD, \*f_{\*x} \preceq \cF^*)
 {\rm d} f^l_{\*x} 
 \nonumber
 \\ 
 & =
 - \sum_{s = 0}^{S_l-1}
 \int_{\tilde{f}_l^{s}}^{\tilde{f}_l^{s+1}}
 \frac{ 
 \sum_{m' \in \cM(l,s)}
 \prod_{l' \neq l} Z_{m'l'}
 }{Z}
 p(f^l_{\*x} \mid \cD) 
 \log
 \frac{ 
 \sum_{m' \in \cM(l,s)}
 \prod_{l' \neq l} Z_{m'l'}
 }{Z}
 p(f^l_{\*x} \mid \cD) 
 {\rm d} f^l_{\*x} 
 \nonumber
 \\ 
 & =
 - \sum_{s = 0}^{S_l-1}
 \frac{ 
 \sum_{m' \in \cM(l,s)}
 \prod_{l' \neq l} Z_{m'l'}
 }{Z}
 \int_{\tilde{f}_l^{s}}^{\tilde{f}_l^{s+1}}
 p(f^l_{\*x} \mid \cD) 
 \left(
 \log
 \frac{ 
 \sum_{m' \in \cM(l,s)}
 \prod_{l' \neq l} Z_{m'l'}
 }{Z}
 +
 \log
 p(f^l_{\*x} \mid \cD) 
 \right)
 {\rm d} f^l_{\*x} 	
 \nonumber
 \\ 
 &=
 - \sum_{s = 0}^{S_l-1}
 \frac{ 
 \sum_{m' \in \cM(l,s)}
 \prod_{l' \neq l} Z_{m'l'}
 }{Z}
 \Biggl(
 (\Phi(\tilde{\alpha}_{s+1,l}) - \Phi(\tilde{\alpha}_{s,l}))
 \log
 \frac{ 
 \sum_{m' \in \cM(l,s)}
 \prod_{l' \neq l} Z_{m'l'}
 }{Z}
 \nonumber
 \\
 & 
 \qquad +
 \int_{\tilde{f}_l^{s}}^{\tilde{f}_l^{s+1}}
 p(f^l_{\*x} \mid \cD) 
 \log
 p(f^l_{\*x} \mid \cD) 
 {\rm d} f^l_{\*x} 	
 \Biggr)
 \label{eq:single-entropy-ind-step1}
\end{align}
By transforming the last term in the parenthesis into the entropy of the truncated normal distribution, we see
\begin{align}
 &
 \int_{\tilde{f}_l^{s}}^{\tilde{f}_l^{s+1}}
 p(f^l_{\*x} \mid \cD) 
 \log p(f^l_{\*x} \mid \cD)
 {\rm d} f^l_{\*x} 
 \nonumber \\ 
 &=
 \tilde{Z}_{sl}
 \int_{\tilde{f}_l^{s}}^{\tilde{f}_l^{s+1}}
 \frac{p(f^l_{\*x} \mid \cD) }{\tilde{Z}_{sl}}
 \left(
 \log \frac{p(f^l_{\*x} \mid \cD)}{\tilde{Z}_{sl}}
 +
 \log \tilde{Z}_{sl}	
 \right)
 {\rm d} f^l_{\*x} 
 \nonumber \\ 
 &=	
 - \tilde{Z}_{sl}
 \left\{
 \log
 (\sqrt{2 \pi e} \sigma_{l}(\*x) \tilde{Z}_{sl})
 +
 \frac{ 
 {\tilde{\alpha}_{s,l} \phi(\tilde{\alpha}_{s,l}) 
 - \tilde{\alpha}_{s+1,l} \phi(\tilde{\alpha}_{s+1,l}) 
 }
 }{ 2 \tilde{Z}_{sl}}
 \right\}
 \nonumber \\
 & \qquad
 +
 \tilde{Z}_{sl}
 \int_{\tilde{f}_l^{s}}^{\tilde{f}_l^{s+1}}
 \frac{p(f^l_{\*x} \mid \cD) }{\tilde{Z}_{sl}}
 \log \tilde{Z}_{sl}	
 {\rm d} f^l_{\*x} 	
 \nonumber
 \\ 
 & =
 - \tilde{Z}_{sl}
 \left\{
 \log
 (\sqrt{2 \pi e} \sigma_{l}(\*x) \tilde{Z}_{sl})
 +
 \frac{ 
 \tilde{\alpha}_{s,l} \phi(\tilde{\alpha}_{s,l}) 
 - \tilde{\alpha}_{s+1,l} \phi(\tilde{\alpha}_{s+1,l}) }{ 2 \tilde{Z}_{sl}}
 \right\}
 +
 \tilde{Z}_{sl}	
 \log \tilde{Z}_{sl}	
 \label{eq:unnormalized-truncated-ent}
\end{align}
By substituting this into \eqref{eq:single-entropy-ind-step1}, we obtain
\begin{align*}
 & 
 H[ p(f^l_{\*x} \mid \cD, \*f_{\*x} \preceq \cF^*) ] 
 \\ 
 & =
 - \sum_{s = 0}^{S_l-1}
 \frac{ 
 \sum_{m' \in \cM(l,s)}
 \prod_{l' \neq l} Z_{m'l'}
 }{Z}
 \Biggl(
 \tilde{Z}_{sl}
 \log
 \frac{ 
 \sum_{m' \in \cM(l,s)}
 \prod_{l' \neq l} Z_{m'l'}
 }{Z}
 \\
 & 
 \qquad 
 - \tilde{Z}_{sl}
 \left\{
 \log
 (\sqrt{2 \pi e} \sigma_{l}(\*x) \tilde{Z}_{sl})
 +
 \frac{ 
 \tilde{\alpha}_{s,l} \phi(\tilde{\alpha}_{s,l}) 
 - \tilde{\alpha}_{s+1,l} \phi(\tilde{\alpha}_{s+1,l}) }{ 2 \tilde{Z}_{sl}}
 \right\}
 +
 \tilde{Z}_{sl}	
 \log \tilde{Z}_{sl}	
 \Biggr)
 \\ 
 & =
 - \sum_{s = 0}^{S_l-1}
 \frac{ 
 \sum_{m' \in \cM(l,s)}
 \prod_{l' \neq l} Z_{m'l'}
 }{Z}
 \tilde{Z}_{sl}
 \Biggl(
 \log
 \frac{ 
 \sum_{m' \in \cM(l,s)}
 \prod_{l' \neq l} Z_{m'l'}
 }{Z}
 \\
 & 
 \qquad 
 - 
 \log
 (\sqrt{2 \pi e} \sigma_{l}(\*x) \tilde{Z}_{sl})
 -
 \frac{ 
 \tilde{\alpha}_{s,l} \phi(\tilde{\alpha}_{s,l}) 
 - \tilde{\alpha}_{s+1,l} \phi(\tilde{\alpha}_{s+1,l}) }{ 2 \tilde{Z}_{sl}}
 +
 \log \tilde{Z}_{sl}	
 \Biggr)
\end{align*}
From the definition, 
if
$m' \in \cM(l,s)$,
then 
$\tilde{Z}_{sl} = Z_{m'l}$,
from which we obtain
$\sum_{m' \in \cM(l,s)} \left( \prod_{l' \neq l} Z_{m'l'} \right) \tilde{Z}_{sl} = \sum_{m' \in \cM(l,s)} \left( \prod_{l' = 1}^L Z_{m'l'} \right)  = \sum_{m' \in \cM(l,s)} Z_{m'}$.
This derives
\begin{align*}
 & 
 H[ p(f^l_{\*x} \mid \cD, \*f_{\*x} \preceq \cF^*) ] 
 \\ 
 & =
 - \sum_{s = 0}^{S_l-1}
 \frac{ 
 \sum_{m' \in \cM(l,s)}
 Z_{m'}
 }{Z}
 \Biggl(
 \log
 \frac{ 
 \sum_{m' \in \cM(l,s)}
 Z_{m'}
 }{Z}
 \\
 & 
 \qquad 
 - 
 \log
 (\sqrt{2 \pi e} \sigma_{l}(\*x) {\tilde{Z}_{sl}})
 -
 \frac{ 
 \tilde{\alpha}_{s,l} \phi(\tilde{\alpha}_{s,l}) 
 - \tilde{\alpha}_{s+1,l} \phi(\tilde{\alpha}_{s+1,l}) }{ 2 \tilde{Z}_{sl}}
 \Biggr)	
\end{align*}

\section{Extension to Correlated Objectives}
\label{app:correlated}


Objective functions in MOO are often correlated each other.
Then, by incorporating the correlation into GPR, the search can be accelerated.
Several studies have considered constructing multiple correlated GPR models including \emph{multi-task GPR} model \citep{Bonilla2008-Multi} and \emph{semiparametric latent factor} (SLF) model \citep{Seeger04-Semiparametric}.
In the standard approaches including multi-task GPR and SLF, the multi-dimensional predictive distribution for $\*x$ is reduced to a multi-variate Gaussian distribution $\cN(\*\mu(\*x), \*\Sigma(\*x))$, where $\*\mu(\*x) \in \RR^L$ and $\*\Sigma(\*x) \in \RR^{L \times L}$ are the predictive mean and covariance matrix.
For considering an extension of PFES to correlated objectives, we assume that the surrogate model is represented as a GPR model jointly for multiple responses.

For the coupled setting, we need to evaluate analytically intractable integrations in \eqref{eq:Z} and \eqref{eq:conditional-entropy-ind}.
The normalization constant $Z$ \eqref{eq:Z} is defined by the sum of the integral of Gaussian distribution on the hyper-rectangle region ($\cC_m$).
The numerical computation of this form of integrations have been extensively studied \citep{Genz2009-Computation} mainly in the context of the Gaussian probability calculation. 
%
%
The integration in the entropy \eqref{eq:conditional-entropy-ind} can also be evaluated through the Gaussian probability (Appendix~\ref{app:moment-TN} shows computational detail).
Although this approach requires $O(L)$ times $L-1$-dimensional and $O(L^2)$ times $L-2$-dimensional CDF calculations, in many practical problems, the number of objectives $L$ is quite small.
%
%


%
{
For the decoupled setting, if $L = 2$, we can derive a simple form of the entropy calculation because the conditional distribution $p(\*f^{{\setminus l}}_{\*x} \mid f^l_{\*x}, \cD)$ in \eqref{eq:marginal-given-frontier} becomes a one-dimensional Gaussian distribution.}
Let 
$\sigma_{12}^2(\*x)$ 
be the predictive covariance of two-dimensional $\*f_{\*x} = (f^1_{\*x}, f^2_{\*x})^\top$.
Then, we obtain the following theorem:
\begin{theo}
 \label{thm:single-entropy-corr-L-eq-2}
 Let
 $W_{m'2}(f^1_{\*x}) \coloneqq	       
 \Phi
 \left(
 \frac{f_2^{m'+1} - u(\*x \mid f^1_{\*x})}{s(\*x)}
 \right)
 -
 \Phi
 \left(
 \frac{f_2^{m'} - u(\*x \mid f^1_{\*x})}{s(\*x)}
 \right)$, 
 where 
 $u(\*x \mid f) \coloneqq \frac {\sigma^2_{12}(\*x) ( f - \mu_1(\*x) )}	{\sigma^2_1(\*x)} + \mu_2(\*x)$ 
 and
 $s^2(\*x) \coloneqq \sigma^2_2(\*x) - \frac{(\sigma^2_{12}(\*x))^2}{\sigma^2_1(\*x)}$.
 For the two dimensional correlated GPRs $\*f(\*x)$, the entropy of $p(f^1_{\*x} \mid \cD, \*f_{\*x} \preceq \cF^*)$ is given by
 \begin{align*}
  H[ p(f^1_{\*x} \mid \cD, \*f_{\*x} \preceq \cF^*) ]
  =
  -	       
  \sum_{s = 0}^{S_1 - 1}
  \int_{\tilde{f}_1^{s}}^{\tilde{f}_1^{s+1}}
  \left(
  \frac{
  {\phi(\alpha_{\*x}^1)}
  }{Z}	
  \sum_{m' \in \cM(1,s)}
  W_{m'2}(f^1_{\*x})
  \right)
  \log
  \left(
  \frac{
  {\phi(\alpha_{\*x}^1)}
  }{Z}	
  \sum_{m' \in \cM(1,s)}
  W_{m'2}(f^1_{\*x})
  \right)
  {\rm d} f^1_{\*x} 
\end{align*} 
where 
$\alpha_{\*x}^1 \coloneqq ( f_{\*x}^1 - \mu_1(\*x) )/ \sigma_1(\*x)$.
\end{theo}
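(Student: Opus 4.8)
The plan is to mirror the marginalisation argument in the proof of Theorem~\ref{thm:single-entropy-ind}, now accounting for the correlation between $f^1_{\*x}$ and $f^2_{\*x}$ through the Gaussian conditional. First I would factor the bivariate predictive as $p(\*f_{\*x} \mid \cD) = p(f^1_{\*x} \mid \cD)\, p(f^2_{\*x} \mid f^1_{\*x}, \cD)$, where the marginal of $f^1_{\*x}$ is $\cN(\mu_1(\*x), \sigma^2_1(\*x))$ (giving the $\phi(\alpha_{\*x}^1)$ factor) and the conditional of $f^2_{\*x}$ given $f^1_{\*x}$ is the univariate Gaussian with mean $u(\*x \mid f^1_{\*x})$ and variance $s^2(\*x)$. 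These are exactly the standard Gaussian conditioning formulas, and they motivate the definitions of $u$ and $s$ in the statement. The crucial point, and the reason this works only for $L = 2$, is that conditioning on $f^1_{\*x}$ leaves a \emph{single} remaining dimension, so the conditional is one-dimensional Gaussian and its mass on any interval is a difference of standard CDFs.

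Next I would marginalise $f^2_{\*x}$ out of the PFTN over the dominated region $\cF$. Fixing $f^1_{\*x}$ in the interval $(\tilde{f}_1^{s}, \tilde{f}_1^{s+1}]$ (so $s = s_1^{(f^1_{\*x})}$), the key geometric fact is that the partition is purely geometric and hence distribution-independent: the set of dominated $f^2_{\*x}$ values at this fixed $f^1_{\*x}$ is exactly $\bigcup_{m' \in \cM(1,s)} (\ell^2_{m'}, u^2_{m'}]$, since the cells sharing the first-dimension interval $(\tilde{f}_1^{s}, \tilde{f}_1^{s+1}]$ are precisely those indexed by $\cM(1,s)$. Integrating the conditional density over the second-dimension interval of each $\cC_{m'}$ produces $W_{m'2}(f^1_{\*x})$, and summing over $m' \in \cM(1,s)$ reproduces the $f^1_{\*x}$-dependent, piecewise density appearing inside the logarithm in the statement (and $0$ outside $\cF$).

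Finally I would substitute this marginal into $H[\cdot] = -\int p \log p\, \mathrm{d} f^1_{\*x}$ and split the integral over the $S_1$ intervals $(\tilde{f}_1^{s}, \tilde{f}_1^{s+1}]$ for $s = 0, \ldots, S_1 - 1$, on each of which the density is the single smooth expression above; this yields the claimed formula directly. The main obstacle, and the reason the result is stated as a one-dimensional integral rather than a closed form, is that, unlike in Theorem~\ref{thm:single-entropy-ind}, the weights $W_{m'2}(f^1_{\*x})$ genuinely depend on $f^1_{\*x}$: correlation makes the conditional mean $u(\*x \mid f^1_{\*x})$ slide with $f^1_{\*x}$, so the integrand is a product of $\phi(\alpha_{\*x}^1)$ with a sum of CDF-differences, all inside a logarithm. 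This product has no elementary antiderivative, so the interval-wise integral cannot collapse to the neat algebraic form of the independent case and must be left for one-dimensional numerical quadrature. The substantive content of the proof is therefore the marginalisation step that identifies the correct piecewise density, not a final integration.
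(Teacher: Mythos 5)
Your proposal is correct and follows essentially the same route as the paper's proof: factor the bivariate predictive into the marginal of $f^1_{\*x}$ and the univariate Gaussian conditional with mean $u(\*x \mid f^1_{\*x})$ and variance $s^2(\*x)$, marginalise over the second-dimension intervals of the cells indexed by $\cM(1,s)$ to get the $\sum_{m'} W_{m'2}(f^1_{\*x})$ weight, and then split the entropy integral over the $S_1$ intervals. Your closing remarks on why the result is specific to $L=2$ and why the integral must remain a one-dimensional quadrature also match the paper's discussion.
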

\begin{proof}
Let
$u(\*x \mid f) \coloneqq \frac {\sigma^2_{12}(\*x) ( f - \mu_1(\*x) )}	{\sigma^2_1(\*x)} + \mu_2(\*x)$ 
and
 $s^2(\*x) \coloneqq \sigma^2_2(\*x) - \frac{(\sigma^2_{12}(\*x))^2}{\sigma^2_1(\*x)}$.	
From \eqref{eq:marginal-given-frontier}, the marginal can be written as follows:
\begin{align*}
 p(f^1_{\*x} \mid \cD, \*f_{\*x} \preceq \cF^*) 
  & =
  \frac{p(f^1_{\*x} \mid \cD)}{Z}	
  \sum_{m' \in \cM(1,s_1^{(f^1_{\*x})}) }
 \int_{\ell^2_{m'}}^{u^2_{m'}}
  p(f^{2}_{\*x} \mid f^1_{\*x}, \cD) 
  {\rm d} f^2_{\*x}		       
  \\ 
  & =
  \frac{p(f^1_{\*x} \mid \cD)}{Z}	
 \sum_{m' \in \cM(1,s_1^{(f^1_{\*x})}) }
  W_{m'2}(f^1_{\*x})
\end{align*}
where
\begin{align*}	       
  W_{m'2}(f^1_{\*x}) \coloneqq	       
  \Phi
  \left(
 \frac{u^2_{m'} - u(\*x \mid f^1_{\*x})}{s(\*x)}
  \right)
  -
  \Phi
  \left(
 \frac{\ell^2_{m'} - u(\*x \mid f^1_{\*x})}{s(\*x)}
  \right)
\end{align*}
Then, the entropy is 
 \begin{align*}
  &
  H[ p(f^1_{\*x} \mid \cD, \*f_{\*x} \preceq \cF^*) ]
  \\
  & =
  - 
  \sum_{s = 0}^{S_1 - 1}
  \int_{\tilde{f}_1^{s}}^{\tilde{f}_1^{s+1}}	
  p(f^1_{\*x} \mid \cD, \*f_{\*x} \preceq \cF^*) 
  \log
  p(f^1_{\*x} \mid \cD, \*f_{\*x} \preceq \cF^*) 
  {\rm d} f^1_{\*x}
  \\
  & =
  - 
  \sum_{s = 0}^{S_1 - 1}
  \int_{\tilde{f}_1^{s}}^{\tilde{f}_1^{s+1}}
  \left(
  \frac{p(f^1_{\*x} \mid \cD)}{Z}	
  \sum_{m' \in \cM(1,s)}
  W_{m'2}(f^1_{\*x})
  \right)
  \log
  \left(
  \frac{p(f^1_{\*x} \mid \cD)}{Z}	
  \sum_{m' \in \cM(1,s)}
  W_{m'2}(f^1_{\*x})
  \right)
  {\rm d} f^1_{\*x}
  \\ 
  & =
  {
  -	       
  \sum_{s = 0}^{S_1 - 1}
  \int_{\tilde{f}_1^{s}}^{\tilde{f}_1^{s+1}}
  \left(
  \frac{
  \phi(\alpha_{\*x}^1)
  }{Z}	
  \sum_{m' \in \cM(1,s)}
  W_{m'2}(f^1_{\*x})
  \right)
  \log
  \left(
  \frac{
  \phi(\alpha_{\*x}^1)
  }{Z}	
  \sum_{m' \in \cM(1,s)}
  W_{m'2}(f^1_{\*x})
  \right)
  {\rm d} f^1_{\*x}
  }
 \end{align*}
where 
$\alpha_{\*x}^1 \coloneqq ( f_{\*x}^1 - \mu_1(\*x) )/ \sigma_1(\*x)$.
\end{proof}
\noindent
Although the integral inside the sum is analytically intractable, we can numerically calculate it easily because the integral is over the one-dimensional interval.

{
In the case of $L > 2$, 
the integral 
$\int_{\cC_{m'}^{{\setminus l}}} p(\*f^{{\setminus l}}_{\*x} \mid f_{\*x}^l, \cD) {\rm d} \*f^{{\setminus l}}_{\*x}$
in \eqref{eq:marginal-given-frontier} 
is also the multi-dimensional Gaussian integration \citep{Genz2009-Computation}.
%
The marginal density
$p(f^l_{\*x} \mid \cD, \*f_{\*x} \preceq \cF^*)$ 
defined by \eqref{eq:marginal-given-frontier} can 
also be evaluated through the integration of the Gaussian density because 
$p(\*f^{{\setminus l}}_{\*x} \mid f_{\*x}^l, \cD)$
can be analytically derived for a given $f_{\*x}^l$.
Here again, for the integral in \eqref{eq:marginal-given-frontier}, we can use numerical technique for the Gaussian probability \citep{Genz2009-Computation}.
Then, we can simply approximate the integral of the entropy
$\int_{f^l_{\*x}}
p(f^l_{\*x} \mid \cD, \*f_{\*x} \preceq \cF^*) \log
p(f^l_{\*x} \mid \cD, \*f_{\*x} \preceq \cF^*)
{\mathrm d} f^l_{\*x}
$ 
by a sum of finite grid points.
This is also one-dimensional integral, and thus accurate approximation can be expected.
}

%


\section{Entropy Evaluation for Correlated Objectives}
\label{app:moment-TN}

Here, we redefine
\begin{align*}
 Z \coloneqq
 \int_{\cF}
 p(\*f_{\*x} \mid \cD) {\mathrm d} \*f_{\*x},
 \text{ and }
 Z_m \coloneqq
 \int_{\cC_m}
 p(\*f_{\*x} \mid \cD) {\mathrm d} \*f_{\*x},
\end{align*}
which indicate $Z = \sum_{m=1}^M Z_m$.
The entropy of the conditional distribution
$p(\*f_{\*x} \mid \cD, \*f_{\*x} \preceq \cF^*)$
can be transformed as follows:
\begin{align}
 H[p(\*f_{\*x} \mid \cD, \*f_{\*x} \preceq \cF^*)]
 & =
 -
 \int_{\cF}
 \frac{p(\*f_{\*x} \mid \cD)}{Z}
 \log
 \frac{p(\*f_{\*x} \mid \cD)}{Z}
 {\mathrm d} \*f_{\*x}
 \nonumber
 \\ 
 & =
 -
 \frac{1}{Z}
 \sum_{m = 1}^M
 \int_{\cC_m}
 p(\*f_{\*x} \mid \cD)
 \log
 p(\*f_{\*x} \mid \cD)
 {\mathrm d} \*f_{\*x}
 +
 \log Z
 \nonumber
 \\ 
 & =
 \frac{1}{Z}
 \sum_{m = 1}^M
 Z_m
 \left(
 \underbrace{
 -
 \int_{\cC_m}
 \frac{p(\*f_{\*x} \mid \cD)}{Z_m}
 \log
 \frac{p(\*f_{\*x} \mid \cD)}{Z_m}
 {\mathrm d} \*f_{\*x}
 }_{(\star)}
 +
 \log Z_m
 \right)
 -
 \log Z
 \label{eq:entropy-given-frontier-corr}
\end{align}
The term indicated by $\star$ is the entropy of the multi-variate truncated normal distribution.
This term can also be written as
$- \EE_{\rm TN}
 \left[
 \log
 \frac{p(\*f_{\*x} \mid \cD)}{Z_m}
 \right]$,
where
$\EE_{\rm TN}$
is an expectation by the truncated normal distribution
\begin{align*}
 p(\*f_{\*x} \mid \cD, \*f_{\*x} \in \cC_m)
 =
 \begin{cases}
  \cN(\*\mu,\*\Sigma) / Z_m, & \text{if } \*f_{\*x} \in \cC_m, \\
  0, & \text{otherwise},
 \end{cases}
\end{align*}
with {the predictive mean $\*\mu \in \RR^L$ and the predictive covariance matrix $\*\Sigma \in \RR^{L \times L}$ of the current GPR.}
We derive that this entropy can be represented through the moment of the truncated normal distribution:
\begin{align}
 - \EE_{\rm TN}
 \left[
 \log
 \frac{p(\*f_{\*x} \mid \cD)}{Z_m}
 \right]
 & =
 - \EE_{\rm TN}
 \left[
 \log
 {p(\*f_{\*x} \mid \cD)}
 \right]
 +
 \log
 {Z_m}
 \nonumber
 \\ 
 & =
 -
 \EE_{\rm TN}
 \left[
 - \frac{1}{2} \log | 2 \pi \*\Sigma |
 - \frac{1}{2} (\*f_{\*x} - \*\mu)^\top \*\Sigma^{-1} (\*f_{\*x} - \*\mu)
 \right]
 +
 \log
 {Z_m}
 \nonumber
 \\ 
 & =
 \frac{1}{2} \log | 2 \pi \*\Sigma |
 +
 \frac{1}{2}
 \EE_{\rm TN}
 \left[
 (\*f_{\*x} - \*\mu)^\top \*\Sigma^{-1} (\*f_{\*x} - \*\mu)
 \right]
 +
 \log
 {Z_m}
 \label{eq:truncated-ent}
\end{align}
Let
$\*\mu_{\rm TN} \coloneqq \EE_{\rm TN}[ \*f_{\*x} ]$,
$\*d \coloneqq \*\mu_{\rm TN} - \*\mu$,
and
$\*\Sigma_{\rm TN} \coloneqq \EE_{\rm TN}[ (\*f_{\*x} - \*\mu_{\rm TN}) (\*f_{\*x} - \*\mu_{\rm TN})^\top ]$.
Then, the second term of the above equation (\ref{eq:truncated-ent}) is written as
\begin{align}
 \EE_{\rm TN}
 \left[
 (\*f_{\*x} - \*\mu)^\top \*\Sigma^{-1} (\*f_{\*x} - \*\mu)
 \right]
 & =
 \mathrm{Trace}\left(
 \*\Sigma^{-1}
 \EE_{\rm TN}
 \left[
 (\*f_{\*x} - \*\mu)
 (\*f_{\*x} - \*\mu)^\top
 \right]
 \right)
 \nonumber
 \\ 
 & =
 \mathrm{Trace}\left(
 \*\Sigma^{-1}
 \EE_{\rm TN}
 \left[
 (\*f_{\*x} - \*\mu_{\rm TN} + \*d)
 (\*f_{\*x} - \*\mu_{\rm TN} + \*d)^\top
 \right]
 \right)
 \nonumber
 \\ 
 & =
 \mathrm{Trace}\left(
 \*\Sigma^{-1}
 (
 \EE_{\rm TN}
 \left[
 (\*f_{\*x} - \*\mu_{\rm TN})
 (\*f_{\*x} - \*\mu_{\rm TN})^\top
 \right]
 +
 \EE_{\rm TN}
 \left[
 \*d
 \*d^\top
 \right]
 )
 \right)
 \nonumber
 \\ 
 & =
 \mathrm{Trace}\left(
 \*\Sigma^{-1}
 (
 \*\Sigma_{\rm TN}
 +
 \*d
 \*d^\top
 )
 \right)
 \label{eq:trunc_ex_squared_form}
\end{align}
If $\*\mu_{\rm TN}$ and $\*\Sigma_{\rm TN}$ are available, the entropy \eqref{eq:entropy-given-frontier-corr} can be evaluated by combining \eqref{eq:truncated-ent} and \eqref{eq:trunc_ex_squared_form}.

{The two expected values $\*\mu_{\rm TN}$ and $\*\Sigma_{\rm TN}$ can be obtained from the first and the second moment of the multi-variate truncated normal distribution, for which \citet{Manjunath2009-Moments} show efficient computations through the Gaussian integral calculation.}
Let $\mu^i$ and $\mu^i_{TN}$ be the $i$-th element of $\bm{\mu}$ and $\bm{\mu}_{\rm TN}$, respectively, and let $\sigma^{i, j}$ and $\sigma^{i, j}_{\rm TN}$ be the $(i, j)$-th element of $\bm{\Sigma}$ and $\bm{\Sigma}_{\rm TN}$, respectively.
%
By defining the $k$-th dimensional marginal distribution of the truncated normal as
\begin{align*}
 F_k(f^k_{\*x}) &= p( f^k_{\*x} \mid \cD, \*f_{\*x} \in \cC_m) \\
 &= \int^{u^1_m}_{\ell^1_{m}} \dots \int^{u^{k-1}_m}_{\ell^{k-1}_{m}}  \int^{u^{k+1}_m}_{\ell^{k+1}_{m}} \dots \int^{u^L_m}_{\ell^L_{m+1}}
 p(\*f_{\*x} \mid \cD, \*f_{\*x} \in \cC_m)
 \mathrm{d}\bm{f}^{\backslash k}_{\*x},
\end{align*}
$\mu^i_{\rm TN}$ can be represented as
\begin{align}
 \mu^i_{\rm TN} &= \mu^i + \sum_{k=1}^L
 \sigma^{i, k}\bigl( F_k(\ell_m^k) - F_k(u_m^k) \bigl),
 \nonumber \\
 &= \mu^i + d^i,
 \label{eq:mu_TN}
\end{align}
where
$d^i \coloneqq \sum_{k=1}^L \sigma^{i, k}\bigl( F_k(\ell_{m}^k) - F_k(u_{m}^k) \bigl)$.
For $\sigma^{i, j}_{\rm TN}$,
by defining the $(k,q)$-th two dimensional marginal distribution of the truncated normal as
\begin{align*}
 F_{k, q}(f^k_{\*x}, f^q_{\*x})
 &= p( f^k_{\*x}, f^q_{\*x} \mid \cD, \*f_{\*x} \in \cC_m) \\
 &=
 \int^{u^1_m}_{\ell^1_{m}}
 \dots
 \int^{u^{k-1}_m}_{\ell^{k-1}_{m}}
 \int^{u^{k+1}_m}_{\ell^{k+1}_{m}}
 \dots
 \int^{u^{q-1}_m}_{\ell^{q-1}_{m}}
 \int^{u^{q+1}_m}_{\ell^{q+1}_{m}}
 \dots
 \int^{u^L_m}_{\ell^L_{m}}
 p(\*f_{\*x} \mid \cD, \*f_{\*x} \in \cC_m)
 \mathrm{d} \bm{f}^{\backslash k, q}_{\*x},
\end{align*}
we obtain
\begin{align}
 \sigma^{i, j}_{TN}
 &=
 \sigma^{i, j} - d^i d^j +
 \sum_{k=1}^L \sigma^{i, k}\frac{\sigma^{j, k}
 \bigl( \ell_m^k F_k(\ell^k_m) - u^{m}_k F_k(u^k_{m}) \bigl)}{\sigma^{k, k}}
 \nonumber
 \\
 &+
 \sum_{k=1}^L \sigma^{i, k}
 \sum_{q \neq k} \biggl( \sigma^{j, q} - \frac{\sigma^{k, q}\sigma^{j, k}}{\sigma^{k, k}}\biggl)
 \nonumber
 \\
 & \qquad \cdot
 \biggl[ \bigl( F_{k, q}(\ell_m^k, \ell_m^q) - F_{k, q}(\ell_m^k, u_{m}^q)\bigl) - \bigl( F_{k, q}(u_{m}^k, \ell_m^q) - F_{k, q}(u_{m}^k, u_{m}^q)\bigl)\biggl].
 \label{eq:Sigma_TN}
\end{align}
Thus, to calculate \eqref{eq:mu_TN} and \eqref{eq:Sigma_TN}, $O(L)$ times $L - 1$ dimensional Gaussian integration and $O(L^2)$ times $L - 2$ dimensional Gaussian integration are necessary.

\section{Acquisition Function Computation}
\label{app:time}

We randomly selected $50, 100$, and $200$ training instances, and calculated each acquisition function for randomly selected $100$ points.
We measured CPU time on our python code by the single thread execution.
Precise evaluation of computational cost is difficult because of its dependence on implementation detail.
Our main purpose here is to show PFES is feasible enough for reasonable size of $L$.
%
%
The results are shown in \tablename~\ref{tab:time-AS}-\ref{tab:time-DTLZ4} (OOM indicates out-of-memory).

Overall, the results have the same tendency as \tablename~\ref{tab:time} in our main text. 
Note that since Ackley/Sphere 2D and ZDT4 are $L = 2$, \#cells in PFES is always $50$, which is equal to $|\mathrm{PF}|$.

\begin{table}[t]
\caption{Computational time of acquisition function on Ackley/Sphere 2D.}
 \label{tab:time-AS}
 \centering
\begin{tabular}{llrrr}
{}                     &{}            & 50 train data    & 100 train data    & 200 train data    \\
\hline \hline
ParEGO                 &              & 0.44 $\pm$ 0.01   & 0.67 $\pm$ 0.02    & 1.32 $\pm$ 0.07    \\
\hline
SMSego                 &              & 0.20 $\pm$ 0.01   & 0.35 $\pm$ 0.01    & 0.62 $\pm$ 0.02    \\
\hline
EHI                    &              & 0.10 $\pm$ 0.00   & 0.13 $\pm$ 0.00    & 0.13 $\pm$ 0.00   \\
\hline
                MESMO  &         & 48.35 $\pm$ 1.27   & 48.41 $\pm$ 0.85    & 48.22 $\pm$ 0.66    \\
\hline
                PFES   & Total        & 39.42 $\pm$ 0.65   & 41.20 $\pm$ 0.54    & 43.81 $\pm$ 0.86    \\
                       & RFM          & 0.05 $\pm$ 0.00 & 0.060 $\pm$ 0.00  & 0.06 $\pm$ 0.00 \\
                       & NSGAII       & 38.97 $\pm$ 0.67   & 40.67 $\pm$ 0.56    & 43.26 $\pm$ 0.88    \\
                       & QHV          & 0.040 $\pm$ 0.00 & 0.05 $\pm$ 0.00  & 0.04 $\pm$ 0.00  \\
                       & eval entropy & 0.36 $\pm$ 0.02   & 0.43 $\pm$ 0.03    & 0.44 $\pm$ 0.02    \\
                       & \# Cell       & 50.00 $\pm$ 0.00   & 50.00 $\pm$ 0.00    & 50.00 $\pm$ 0.00   
\end{tabular}
\end{table}

\begin{table}[t]
 \caption{Computational time of acquisition function on ZDT4.}
 \label{tab:time-ZDT4}
 \centering
\begin{tabular}{llrrr}
           &              & 50 train data    & 100 train data    & 200 train data    \\
\hline \hline
ParEGO                 &              & 0.40 $\pm$ 0.05    & 0.55 $\pm$ 0.06     & 1.23 $\pm$ 0.02    \\
\hline
SMSego                 &              & 0.24 $\pm$ 0.02    & 0.37 $\pm$ 0.03     & 0.71 $\pm$ 0.03    \\
\hline
EHI                    &              & 0.11 $\pm$ 0.00    & 0.11 $\pm$ 0.00     & 0.13 $\pm$ 0.00   \\
\hline
                MESMO  &         & 51.33 $\pm$ 0.47   & 48.40 $\pm$ 0.52    & 44.17 $\pm$ 0.63    \\
\hline
                PFES   & Total        & 51.59 $\pm$ 0.44   & 48.71 $\pm$ 0.35    & 51.89 $\pm$ 0.16    \\
                       & RFM          & 0.06  $\pm$ 0.00   & 0.06  $\pm$ 0.00    & 0.07  $\pm$ 0.00   \\
                       & NSGAII       & 50.96 $\pm$ 0.44   & 48.13 $\pm$ 0.37    & 51.26 $\pm$ 0.16    \\
                       & QHV          & 0.05  $\pm$ 0.00   & 0.05  $\pm$ 0.00    & 0.05  $\pm$ 0.00   \\
                       & eval entropy & 0.51  $\pm$ 0.02   & 0.47  $\pm$ 0.03    & 0.51 $\pm$ 0.01   \\
                       & \# Cell      & 50.00 $\pm$ 0.00   & 50.00 $\pm$ 0.00    & 50.00 $\pm$ 0.00   
\end{tabular}
\end{table}

\begin{table}[t]
 \caption{Computational time of acquisition function on DTLZ3}
 \label{tab:time-DTLZ3}
 \centering
\begin{tabular}{llrrr}
                       &              & 50 train data    & 100 train data     & 200 train data    \\
\hline \hline
ParEGO                 &              & 0.51 $\pm$ 0.01    & 0.68 $\pm$ 0.03      & 1.35 $\pm$ 0.04    \\
\hline
SMSego                 &              & 1.58 $\pm$ 0.61    & 2.36 $\pm$ 0.90      & 2.75 $\pm$ 0.61     \\
\hline
EHI                    &              & 6.26 $\pm$ 5.76    & 21.19 $\pm$ 30.11    & 20.21 $\pm$ 11.03  \\
\hline
                MESMO  &         & 62.88 $\pm$ 1.39   & 49.38 $\pm$ 0.72     & 62.25 $\pm$ 0.93    \\
\hline
                PFES   & Total        & 65.36 $\pm$ 2.02   & 57.73 $\pm$ 1.26     & 64.97 $\pm$ 1.39    \\
                       & RFM          & 0.11 $\pm$ 0.00    & 0.11 $\pm$ 0.00      & 0.13 $\pm$ 0.00   \\
                       & NSGAII       & 63.024 $\pm$ 2.07  & 55.35 $\pm$ 1.25     & 62.23 $\pm$ 1.40    \\
                       & QHV          & 1.12 $\pm$ 0.12    & 1.15 $\pm$ 0.16      & 1.32 $\pm$ 0.09    \\
                       & eval entropy & 1.10 $\pm$ 0.09   & 1.12 $\pm$ 0.07     & 1.29 $\pm$ 0.08    \\
                       & \# Cell      & 570.99 $\pm$ 87.26 & 614.56 $\pm$ 109.27  & 662.69 $\pm$ 107.33
\end{tabular}
\end{table}

\begin{table}[t]
 \centering
\caption{Computational time of acquisition function on DTLZ4}
 \label{tab:time-DTLZ4}
\begin{tabular}{llrrr}
                       &              & 50 train data     & 100 train data    & 200 train data                        \\
\hline \hline
ParEGO                 &              & 0.53 $\pm$ 0.01     & 0.64 $\pm$ 0.02     & 1.44 $\pm$ 0.04                         \\
\hline
SMSego                 &              & 6.32 $\pm$ 1.50     & 11.83 $\pm$ 1.92    & 19.86 $\pm$ 2.89                        \\
\hline
EHI                    &              & 317.55 $\pm$ 63.18  & 796.10 $\pm$ 199.66 & OOM                                   \\
\hline
                MESMO  &         & 59.90 $\pm$ 0.43    & 62.89 $\pm$ 0.61    & 65.30 $\pm$ 0.93                        \\
\hline
                PFES   & Total        & 62.36 $\pm$ 0.72    & 60.97 $\pm$ 0.36    & 67.85 $\pm$ 0.86                        \\
                       & RFM          & 0.11 $\pm$ 0.00     & 0.11 $\pm$ 0.00     & 0.14 $\pm$ 0.00                         \\
                       & NSGAII       & 59.85 $\pm$ 0.79    & 58.39 $\pm$ 0.35    & 64.77 $\pm$ 0.97                        \\
                       & QHV          & 1.28 $\pm$ 0.08     & 1.23 $\pm$ 0.08     & 1.59 $\pm$ 0.24                         \\
                       & eval entropy & 1.13 $\pm$ 0.07     & 1.25 $\pm$ 0.04     & 1.35 $\pm$ 0.12                         \\
                       & \# Cell       & 647.86 $\pm$ 124.84 & 645.88 $\pm$ 116.01 & 710.44 $\pm$ 141.77
\end{tabular}
\end{table}


\section{Experimental Settings}
\label{app:settings}

For GPR, we used GPy (\url{https://sheffieldml.github.io/GPy/}).
%
The noise term in GPR $\sigma_{\rm noise}$ is fixed at $10^{-4}$.
The marginal likelihood optimization of the Gaussian kernel parameter $\sigma > 0$ is performed by using gradient descent.
This optimization was performed at every iteration in the benchmark experiments.
For the material datasets, we first randomly selected 100 samples to optimize 
$\sigma$
and
$\sigma_{\rm noise}$
through the marginal likelihood optimization.
These values were fixed during the BO procedure.
Since the material datasets are noisy, we employed this approach for avoiding unstable behaviors of all the compared methods because of the unstable GPR hyper-parameters.
We implemented ParEGO and SMSego by ourselves.
For ParEGO, the weighting constant $\rho$ of the augmented Tchebycheff function is set $0.05$ as indicated by the original paper \citep{Knowles2006-ParEGO}.
For SMSego, the coefficient of lower confidence bound is set as $\beta_t = \Phi^{-1}(0.5 + 1/2^L)$ which is also indicated by the original paper \citep{Ponweiser2008-Multiobjective}.
%
For MESMO, about Pareto sampling, we used the same settings as PFES.
The number of sampling is $10$, and the number of basis of RFM is $500$. 






\end{document}